\documentclass{article}
\usepackage[top=1in, bottom=1in, left=1in, right=1in]{geometry}

\usepackage[colorinlistoftodos]{todonotes}

\usepackage{hyperref}
\usepackage{natbib}
\usepackage{booktabs}
\usepackage{mathtools}
\usepackage{subcaption}
\usepackage{comment}
\usepackage{graphicx} 
\usepackage{amsmath,amssymb}
\usepackage{float}
\usepackage{authblk}
\usepackage{amsthm}
\usepackage{amsthm}
\usepackage{tikz-cd}
\usepackage{verbatim}
\usepackage{xcolor}
\usepackage[colorinlistoftodos]{todonotes}
\newtheorem{theorem}{Theorem}
\newtheorem{lemma}{Lemma}
\usepackage{graphicx} 

\title{From Symmetry to Invariance: Learning Galois Equivalent Representations in Finite Fields}
\author{
Zheng Zhang \thanks{Corresponding author: Department of Mathematics, Towson University, 7800 York Rd, Towson, MD 21204, USA. Email: \texttt{zzhang@towson.edu}.},
Na Zhang \thanks{Department of Mathematics, Towson University, 7800 York Rd, Towson, MD 21204, USA. Email: \texttt{nzhang@towson.edu}.}
}
\date{}
\date{}

\begin{document}

\maketitle

\begin{abstract}
Neural networks can learn algebraic operations from finite examples,
but it remains unclear whether this ability transfers across
mathematically equivalent representations of the same operation. We
study this question through multiplication in finite fields under
changes of basis. The Galois action organizes basis representations
into orbits, and bases in the same orbit induce the same coordinate
multiplication map. This structure allows us to separate learning
multiplication from transferring it to basis representations that are
not used for training. We examine several ways of providing or
recovering the relevant orbit structure, including invariant labels,
basis matrices, orbit recognition, and algebraic decomposition. Our
main approach trains a model to predict the Galois action between basis
representations. Repeated applications of the learned transformation
are then used to construct a canonical representative for each orbit,
which supports multiplication on held-out bases through exact
canonical matching. This provides a concrete mechanism for converting
a learned algebraic symmetry into an invariant representation that can
be used for transfer.
\end{abstract}

\section{Introduction}
Neural networks have demonstrated the ability to learn discrete
mathematical operations, including modular arithmetic and finite group
composition
\cite{power2022grokking,Nanda2023ProgressMF,
pmlr-v235-stander24a,kvinge2026can}.
More recently, this line of work has been extended from group operations
to multiplication in more general finite dimensional algebras
\cite{notsawo2026grokking}.
A central question in this line of work is whether successful
generalization reflects the learning of underlying mathematical
structure or merely the memorization of a finite table of examples.

\subsection{From Grokking to Algebraic Representation Learning}
\label{sec:grokking-related}
Grokking provides a useful setting for studying the relationship
between generalization and learned mathematical structure. The original
experiments of \citet{power2022grokking} showed that neural networks
trained on modular arithmetic can achieve almost perfect training
accuracy while remaining in a memorizing regime, and begin to
generalize to the test set only much later.

Subsequent work has related this delayed generalization to optimization
dynamics, implicit bias, feature learning, and the emergence of
structured internal representations
\cite{liu2022towards,lyu2024dichotomy,gromov2023grokking,
kumar2024grokking,rubin2024grokking,mallinar2025emergence}.
Mechanistic analyses have also identified changes in the computations
implemented by the model during training
\cite{Nanda2023ProgressMF}. For our purposes, the main implication of
this literature is that generalization in algebraic tasks can be
studied not only through test accuracy, but also through the
representations and computational structure learned by the model.

This perspective is particularly explicit in studies of modular
arithmetic. \citet{Nanda2023ProgressMF} identified Fourier representations and the
computational mechanism used by Transformers to perform modular
addition. Moreover, \citet{gromov2023grokking} connected grokking to the
emergence of feature representations determined by the underlying
arithmetic task. Related studies by \cite{doshi2024togrok} and \cite{mallinar2025emergence}
have further investigated how memorizing and generalizing solutions
differ in their learned representations.
These studies suggest that successful generalization can
coincide with the emergence of representations that reflect structure
in the underlying algebraic task.

This line of research has also expanded beyond modular arithmetic.
\cite{pmlr-v235-stander24a} studied multiplication in permutation groups
and identified learned mechanisms exploiting subgroup and coset
structure. In addition, \citet{kvinge2026can} investigated which
broader group theoretic properties are encoded by models trained to
predict group operations. More recently, \citet{notsawo2026grokking} extended the study of
grokking from group operations to multiplication in
finite dimensional algebras. They relate generalization to algebraic
properties of the multiplication law, structural properties of the
associated structure tensor, and learned embeddings aligned with the
algebra's representation. Their experiments evaluate generalization
while keeping the coordinate presentation of the algebra fixed between
training and test. Overall, these results provide growing evidence
that models trained on finite algebraic operations can learn structure
beyond individual input and output pairs.

This raises  a related question. If a model learns an algebraic
operation in one coordinate presentation, does that knowledge transfer
when the same operation is expressed in a different but mathematically
equivalent presentation? We study this question using multiplication in finite fields under changes of basis. Whereas the preceding work focuses on learning algebraic structure within a fixed representation,
our focus is on recognizing and exploiting equivalences across
different representations of the same operation.

\subsection{Generalization across Equivalent Basis Representations}
\label{sec:intro-basis-generalization}
Multiplication in finite fields provides a concrete setting in which
equivalent representations can be characterized exactly. Let
\(F=\mathbb F_{2^n}\). Each ordered basis \(B\) of \(F\) over
\(\mathbb F_2\) induces a coordinate multiplication map
\[
m_B([x]_B,[y]_B)=[xy]_B,
\]
while the underlying field multiplication remains fixed. The
Frobenius automorphism
\[
\sigma(x)=x^2
\]
acts componentwise on ordered bases and partitions them into Galois
orbits. Two bases induce the same coordinate multiplication map if and
only if they belong to the same Galois orbit
(see Section~\ref{sec:mathematical-framework}).

Our main experiments focus on $\mathbb F_8$. Its Galois orbit
structure allows us to study whether a model treats mathematically
equivalent basis representations consistently. We examine how this
behavior depends on the information provided to the model and the
objective used during training. Finally, we show that a learned Galois
action can be used to construct canonical representatives that support
multiplication across equivalent bases.

\subsection{Our Contributions}
\label{sec:contributions}

Our main contributions are as follows.

\begin{itemize}

\item \textbf{A controlled setting based on equivalent coordinate
representations.}
We introduce a setting in which the coordinate representation of an
algebraic operation changes but the underlying operation remains fixed.
We use the Galois action on ordered bases to characterize equivalent
representations and to study whether multiplication learned in one
representation can be transferred to another. 

\item \textbf{A constructive route from Galois symmetry to
invariance.}
We train a model to predict the Frobenius action between basis
representations and study its behavior under repeated application. We
then combine the learned transformation with a fixed canonicalization
procedure to construct a common representation for bases in the same
orbit. In \(\mathbb F_8\), this representation supports near perfect
multiplication on held-out bases under exact canonical matching.

\item \textbf{An empirical study of the conditions supporting
algebraic transfer.}
Through controlled experiments in \(\mathbb F_8\) and a fixed sample
of Galois orbits in \(\mathbb F_{16}\), we study several ways of
presenting algebraic structure to the model. The experiments compare relational recognition with direct
identification and examine the
relationship between the accuracy of the learned transformation and
performance in composition, canonicalization, and downstream
multiplication.

\end{itemize}

\section{Mathematical Framework}
\label{sec:mathematical-framework}

We focus on multiplication in the finite field
\[
F=\mathbb F_{2^n}.
\]
The field \(F\) is an \(n\) dimensional vector space over
\(\mathbb F_2\), so its elements can be represented by coordinate
vectors relative to different bases. The multiplication operation in
\(F\) is fixed, but its coordinate representation depends on the
chosen basis. We first fix a polynomial representation of \(F\) and
its associated basis, and then use it as a reference for describing
all other ordered bases.

\subsection{Multiplication in Different Bases}
\label{sec:basis-multiplication}

A standard way to construct the finite field \(F\) is
\[
F
=
\mathbb F_2[\alpha]/\bigl(p(\alpha)\bigr),
\]
where
\[
p(\alpha)\in\mathbb F_2[\alpha]
\]
is irreducible of degree \(n\). Every element \(x\in F\) can be
written uniquely as
\[
x
=
x_0+x_1\alpha+\cdots+x_{n-1}\alpha^{n-1},
\qquad
x_i\in\mathbb F_2.
\]
Therefore,
\[
E=(1,\alpha,\ldots,\alpha^{n-1})
\]
is a basis of \(F\) over \(\mathbb F_2\). We choose this particular
basis as the reference for describing all other bases.

Let
\[
V=\mathbb F_2^n.
\]
The coordinates of \(x\) relative to \(E\) are
\[
[x]_E
=
(x_0,\ldots,x_{n-1})
\in V.
\]

More generally, let
\[
B=(b_0,\ldots,b_{n-1})
\]
be any ordered basis of \(F\) over \(\mathbb F_2\). Every \(x\in F\) has a unique
expression
\[
x=u_0b_0+\cdots+u_{n-1}b_{n-1},
\qquad
u_i\in\mathbb F_2,
\]
and we write
\[
[x]_B
=
(u_0,\ldots,u_{n-1})
\in V
\]
for its coordinate vector relative to \(B\).

Associated with \(B\) is the coordinate multiplication map
\[
m_B:V\times V\longrightarrow V,
\]
defined by
\begin{equation}
m_B([x]_B,[y]_B)=[xy]_B,
\qquad
x,y\in F.
\label{eq:mulmap}
\end{equation}
The multiplication operation in \(F\) is fixed, while its coordinate
representation depends on the chosen basis.

To relate the coordinates associated with \(B\) to those associated
with the reference basis \(E\), define
\[
P_B
=
\begin{bmatrix}
\label{matrix:PB}
[b_0]_E & [b_1]_E & \cdots & [b_{n-1}]_E
\end{bmatrix}
\in\operatorname{GL}_n(\mathbb F_2).
\]
The columns of \(P_B\) are the coordinates of the elements of \(B\)
relative to \(E\). Consequently,
\begin{equation}
[x]_E=P_B[x]_B,
\label{eq:coordinate-conversion}
\end{equation}
for every \(x\in F\). Therefore, \(P_B\) converts coordinates relative to
\(B\) into coordinates relative to \(E\), while \(P_B^{-1}\) performs
the reverse conversion.

Let \(m_E\) denote the coordinate multiplication map associated with
the reference basis \(E\). For
\[
u=[x]_B,
\qquad
v=[y]_B,
\]
we have
\[
P_Bu=[x]_E,
\qquad
P_Bv=[y]_E.
\]
Therefore,
\[
m_E(P_Bu,P_Bv)
=
m_E([x]_E,[y]_E)
=
[xy]_E.
\]
Converting the result back into coordinates relative to \(B\) gives
\begin{equation}
m_B(u,v)
=
P_B^{-1}m_E(P_Bu,P_Bv).
\label{eq:basis-decomposition}
\end{equation}
This means that multiplication relative to \(B\) can be carried out by
converting the operands to the reference coordinates, multiplying
there, and converting the result back to the original coordinates.

Since every ordered basis \(B\) determines a unique matrix
\[
P_B\in\operatorname{GL}_n(\mathbb F_2),
\]
and every matrix in \(\operatorname{GL}_n(\mathbb F_2)\) determines
an ordered basis of \(F\), we can compute the number of ordered bases by the following formula
\[
\left|\operatorname{GL}_n(\mathbb F_2)\right|
=
\prod_{i=0}^{n-1}(2^n-2^i).
\]
In particular,
\[
\left|\operatorname{GL}_3(\mathbb F_2)\right|=168,
\]
so \(\mathbb F_8\) has \(168\) ordered bases over \(\mathbb F_2\).

\subsection{Galois Orbits and Equivalent Representations}
\label{sec:Galois orbits}

Not all ordered bases induce distinct coordinate multiplication maps.
The relevant equivalence is determined by the Galois action on $F$.
The Galois group
\[
\operatorname{Gal}(F/\mathbb{F}_2)
=
\langle\sigma\rangle
\]
is generated by the Frobenius automorphism
\[
\sigma(x)=x^2.
\]
It acts componentwise on an ordered basis:
\[
\sigma(B)
=
(b_0^2,\ldots,b_{n-1}^2).
\]
The resulting Galois orbit is
\[
\mathcal{O}_B
=
\{B,\sigma(B),\ldots,\sigma^{n-1}(B)\}.
\]

The Frobenius action on ordered bases is free, so every Galois orbit
contains exactly $n$ bases. We record this fact explicitly for later
use.

\begin{lemma}
\label{lem:free-galois-action}
Let $F=\mathbb{F}_{2^n}$ and let
$\operatorname{Gal}(F/\mathbb{F}_2)=\langle\sigma\rangle$ act
componentwise on the ordered $\mathbb{F}_2$-bases of $F$. Then
\[
\sigma^k(B)=B
\]
for an ordered basis $B$ if and only if
\[
k\equiv 0\pmod n.
\]
Consequently, every Galois orbit of ordered bases has exactly $n$
elements.
\end{lemma}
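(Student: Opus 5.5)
The plan is to reduce the statement about ordered bases to a statement about the field automorphism \(\sigma^k\) itself. First, observe that \(\sigma^k(B)=(b_0^{2^k},\ldots,b_{n-1}^{2^k})\), so \(\sigma^k(B)=B\) means exactly that \(\sigma^k\) fixes every basis element \(b_i\). Since \(\sigma^k\) is a field automorphism of \(F\), it is in particular \(\mathbb F_2\)-linear: \(\sigma^k(x+y)=\sigma^k(x)+\sigma^k(y)\) by the freshman's dream in characteristic \(2\), and \(\mathbb F_2\)-scalars are fixed. Hence \(\sigma^k\) fixes all of \(B\) if and only if it fixes every \(\mathbb F_2\)-linear combination of the \(b_i\), that is, if and only if \(\sigma^k=\mathrm{id}_F\). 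This step uses only that \(B\) spans \(F\) over \(\mathbb F_2\).

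Next I will show that \(\sigma^k=\mathrm{id}_F\) if and only if \(n\mid k\). For the ``if'' direction, \(F^\times\) is a group of order \(2^n-1\), so \(x^{2^n}=x\) for all \(x\in F\) (including \(0\)). Thus \(\sigma^n=\mathrm{id}\), and then \(\sigma^{k}=\mathrm{id}\) for every multiple \(k\) of \(n\). For the converse, write \(k=qn+r\) with \(0\le r<n\), so that \(\sigma^k=\sigma^r\). If \(\sigma^r=\mathrm{id}\) with \(0<r<n\), then every \(x\in F\) is a root of \(X^{2^r}-X\), a nonzero polynomial of degree \(2^r<2^n=|F|\). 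This contradicts the fact that a polynomial over a field has at most as many roots as its degree. Hence \(r=0\) and \(n\mid k\). Equivalently, \(\sigma\) has order exactly \(n\), matching \(|\operatorname{Gal}(F/\mathbb F_2)|=n\).

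Finally, the orbit count follows by the orbit--stabilizer argument for the cyclic group \(\langle\sigma\rangle\) of order \(n\). The stabilizer of \(B\) consists of those \(\sigma^k\) with \(\sigma^k(B)=B\), which by the above is trivial. So \(B,\sigma(B),\ldots,\sigma^{n-1}(B)\) are pairwise distinct: if \(\sigma^i(B)=\sigma^j(B)\) with \(0\le i<j<n\), applying \(\sigma^{n-i}\) gives \(\sigma^{j-i}(B)=B\) with \(0<j-i<n\), a contradiction. Hence \(|\mathcal O_B|=n\). The only step needing genuine care is the first reduction: one must note that \(\sigma^k\) is additive, so that fixing a basis forces fixing the whole field. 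The rest is the standard root-counting bound.
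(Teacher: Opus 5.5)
Your proposal is correct and follows essentially the same route as the paper's proof: reduce $\sigma^k(B)=B$ to $\sigma^k=\mathrm{id}_F$ via $\mathbb F_2$-linearity, use that $\sigma$ has order $n$, and conclude that the stabilizer is trivial, so every orbit has $n$ elements. The only difference is that you prove the order of $\sigma$ explicitly, using Lagrange's theorem and root counting for $X^{2^r}-X$, whereas the paper cites it as a standard fact.
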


\begin{proof}
See Appendix~\ref{app:proofs}.
\end{proof}

More importantly, Galois orbits characterize exactly when two bases
induce the same coordinate multiplication rule.

\begin{theorem}
\label{thm:galois-multiplication}
Let $B_1$ and $B_2$ be two ordered $\mathbb{F}_2$-bases of
$F=\mathbb{F}_{2^n}$. Then
\[
m_{B_1}=m_{B_2}
\]
if and only if $B_1$ and $B_2$ belong to the same Galois orbit.
\end{theorem}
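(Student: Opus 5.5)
The plan is to identify the coordinate multiplication map with a field isomorphism and then use that the only field automorphisms of \(F\) are the powers of \(\sigma\). For an ordered basis \(B\), write \(\phi_B:F\to V\), \(x\mapsto [x]_B\). This is an \(\mathbb F_2\)-linear bijection, and by \eqref{eq:mulmap} the map \(m_B\) is the multiplication on \(V\) transported from \(F\) along \(\phi_B\): \(m_B(u,v)=\phi_B(\phi_B^{-1}(u)\phi_B^{-1}(v))\).

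\emph{The ``if'' direction.} Suppose \(B_2=\sigma^k(B_1)\), so \(b^{(2)}_i=\sigma^k(b^{(1)}_i)\). Take \(x=\sum u_i b^{(1)}_i\). Since \(\sigma^k\) is \(\mathbb F_2\)-linear, \(\sigma^k(x)=\sum u_i b^{(2)}_i\). Hence \([\sigma^k(x)]_{B_2}=[x]_{B_1}\), that is, \(\phi_{B_2}\circ\sigma^k=\phi_{B_1}\). Now let \(u=[x]_{B_1}\) and \(v=[y]_{B_1}\). Then \(u=[\sigma^k x]_{B_2}\) and \(v=[\sigma^k y]_{B_2}\). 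Because \(\sigma^k\) is multiplicative,
\[
m_{B_2}(u,v)=[\sigma^k(x)\sigma^k(y)]_{B_2}=[\sigma^k(xy)]_{B_2}=[xy]_{B_1}=m_{B_1}(u,v).
\]

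\emph{The ``only if'' direction.} Suppose \(m_{B_1}=m_{B_2}\). Set \(\tau=\phi_{B_2}^{-1}\circ\phi_{B_1}:F\to F\), the \(\mathbb F_2\)-linear bijection with \(\tau(b^{(1)}_i)=b^{(2)}_i\). For \(x,y\in F\),
\[
\phi_{B_2}(\tau(x)\tau(y))=m_{B_2}(\phi_{B_1}x,\phi_{B_1}y)=m_{B_1}(\phi_{B_1}x,\phi_{B_1}y)=\phi_{B_1}(xy)=\phi_{B_2}(\tau(xy)).
\]
So \(\tau\) is additive, bijective and multiplicative. It also fixes \(1\): from \(\tau(1)\tau(y)=\tau(y)\) for all \(y\), and \(\tau(y)\neq0\) for \(y\neq0\), we get \(\tau(1)=1\). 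Thus \(\tau\) is a field automorphism of \(F\), and it fixes \(\mathbb F_2\) automatically. By the standard fact \(\operatorname{Aut}(F)=\operatorname{Gal}(F/\mathbb F_2)=\langle\sigma\rangle\), we have \(\tau=\sigma^k\) for some \(k\). Then \(B_2=\sigma^k(B_1)\in\mathcal O_{B_1}\).

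\emph{Expected obstacle.} The only nontrivial input is the classification \(\operatorname{Aut}(\mathbb F_{2^n})=\langle\sigma\rangle\). I will cite it as standard Galois theory: \(F/\mathbb F_2\) is Galois of degree \(n\), so there are exactly \(n\) automorphisms, and \(\sigma\) has order \(n\) (consistent with Lemma~\ref{lem:free-galois-action}). If a self-contained argument is preferred, write \(F=\mathbb F_2(\alpha)\). Any automorphism must send \(\alpha\) to a root of \(p\), and these roots are exactly \(\alpha^{2^j}\) for \(0\le j<n\). This gives at most \(n\) automorphisms, and the \(n\) distinct powers of \(\sigma\) already account for all of them. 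The other point to handle carefully is the verification that \(\tau(1)=1\), which I have done above.
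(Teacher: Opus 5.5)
Your proposal is correct and follows essentially the same route as the paper: the ``if'' direction transports the multiplication along $\phi_{B_2}\circ\sigma^k=\phi_{B_1}$, and the ``only if'' direction shows that the linear map $b^{(1)}_i\mapsto b^{(2)}_i$ is multiplicative and hence a power of $\sigma$. This map is the paper's $T=\phi_C\circ\phi_B^{-1}$, where the paper orients $\phi$ from coordinates to $F$. Your explicit check that $\tau(1)=1$ and your self-contained count of automorphisms via the roots of $p$ are details the paper leaves implicit, but they do not change the argument.
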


\begin{proof}
See Appendix~\ref{app:proofs}.
\end{proof}

It follows that the number of distinct coordinate multiplication maps is
\[
\frac{|\operatorname{GL}_n(\mathbb{F}_2)|}{n}
=
\frac{1}{n}
\prod_{i=0}^{n-1}(2^n-2^i).
\]
In the case of $F=\mathbb{F}_8$, the $168$ ordered bases form
\[
\frac{168}{3}=56
\]
Galois orbits, corresponding to exactly $56$ distinct coordinate
multiplication maps. This orbit structure is the basis of the
train and test construction introduced in Section~\ref{sec:problem-setting}.

\section{Problem Setting}
\label{sec:problem-setting}

We now specialize the mathematical framework of
Section~\ref{sec:mathematical-framework} to
$F=\mathbb{F}_8$. As established above, the $168$ ordered bases of $F$ form $56$ Galois orbits of size three,
and the three bases within each orbit induce the same coordinate
multiplication map. We use this structure to study whether neural models can recognize and
use the relationship among bases in the same Galois orbit, and how
their behavior depends on the information and learning objective
provided during training. An extension to $50$
sampled Galois orbits in $\mathbb{F}_{16}$ is reported in
Appendix~\ref{app:f16}.

\subsection{Basis Representation}

For the experiments, we represent
\[
F=\mathbb F_8
\]
using the irreducible polynomial
\[
X^3+X+1
\]
and choose
\[
E=(1,\alpha,\alpha^2)
\]
as the fixed reference basis. For every ordered basis \(B\), we use
the matrix \(P_B\) defined in
Section~\ref{sec:mathematical-framework} as its representation. The
columns of \(P_B\) contain the coordinates of the elements of \(B\)
relative to \(E\), and
\[
[x]_E=P_B[x]_B
\]
for every \(x\in F\). Note that \(P_B\) completely describes \(B\)
relative to the fixed reference basis and is the representation of the
basis used throughout the experiments.

\subsection{Train/test split}
\label{sec:train-test-split}

For each experimental seed and each Galois orbit
\[
\mathcal O_j
=
\{B_j,\sigma(B_j),\sigma^2(B_j)\},
\qquad j=1,\ldots,56,
\]
we randomly select two of the three bases for training and reserve the
remaining basis for testing. As a result, there are $112$ training bases and
$56$ held-out test bases. The random selection is determined by the seed, and all conditions
evaluated under the same seed use the same split. Consequently, differences across seeds reflect both random model initialization and variation in which basis from each orbit is reserved for testing.

The training and test sets contain no common ordered bases. However,
for every test basis, the other two bases in the same Galois orbit are
included in the training set. Since all bases in one orbit induce the
same coordinate multiplication map, all \(56\) multiplication maps
evaluated at test time are already represented in the training set.
The test set contains no unseen multiplication rules. Only the basis
used to represent each rule changes at test time.

For every basis $B$, we enumerate all $8^2=64$ ordered pairs
$(x,y)\in F^2$ and define
\[
x_B=[x]_B,\qquad
y_B=[y]_B,\qquad
z_B=[xy]_B=m_B(x_B,y_B).
\]
By doing so, each training basis contributes its complete coordinate
multiplication table. The resulting datasets for multiplication contain
\[
112\times64=7168
\]
training examples and
\[
56\times64=3584
\]
test examples.

The same \(112/56\) basis split is used throughout the
experiments. Additional separation rules required by individual tasks
are described in the corresponding subsections.

\subsection{Basis conditioned multiplication}
\label{sec:basis conditioned-multiplication}

Our primary task is multiplication in the finite field
\(\mathbb F_8\) using different forms of information about the basis. For each example, the operands are
\[
x_B=[x]_B,\qquad y_B=[y]_B,
\]
and the prediction target is
\[
z_B=[xy]_B.
\]
We consider the six input conditions summarized in
Table~\ref{tab:multiplication-conditions}.

\begin{table*}[t]
\centering
\caption{
Input conditions for basis conditioned multiplication.
}
\label{tab:multiplication-conditions}
\begin{tabular}{llcl}
\toprule
\textbf{Condition}
& \textbf{Input}
& \textbf{Tokens} \\
\midrule

No orbit label, no \(P_B\)
& $(x_B,y_B)$
& $2$
\\

Orbit label, no \(P_B\)
& $(\ell_B,x_B,y_B)$
& $3$
 \\

\(P_B\), no orbit label
& $(P_B,x_B,y_B)$
& $11$
\\

Orbit label $+$ \(P_B\)
& $(\ell_B,P_B,x_B,y_B)$
& $12$
\\

Orbit label $+$ constant tokens
& $(\ell_B,\mathbf{0}_9,x_B,y_B)$
& $12$
 \\

Orbit label $+$ shuffled matrix
& $(\ell_B,\widetilde P_B,x_B,y_B)$
& $12$
 \\

\bottomrule
\end{tabular}
\end{table*}

\paragraph{Primary information conditions.}
The condition with operands alone receives no information about the
basis or its Galois orbit and therefore cannot determine which
multiplication map generated an example. 

The condition receiving only the Galois orbit label is given
\[
\ell_B\in\{1,\ldots,56\},
\]
which is shared by the three bases in the orbit of \(B\). Since each
Galois orbit corresponds to one coordinate multiplication map,
\(\ell_B\) directly identifies the multiplication rule associated
with the example. 

On the other hand,
in the condition without an orbit label, the model receives
\[
(P_B,x_B,y_B).
\]
The matrix \(P_B\) uniquely identifies the ordered basis, but it does
not state explicitly which Galois orbit contains that basis. This
condition tests whether the model can recover the relevant
multiplication rule from the basis matrix.

The condition receiving both inputs is given
the orbit label $\ell_B$ along with the basis matrix $P_B$. Comparing it with the condition using the orbit label alone measures
how the additional basis matrix changes performance after the
multiplication rule has already been identified.

\paragraph{Constant token control.}
To control for the nine additional token positions introduced by
\(P_B\), the constant token condition replaces its entries with
\[
\mathbf 0_9=(0,\ldots,0)\in\mathbb F_2^9,
\]
which is identical for every basis and every example. This condition has the same sequence length, token type, and
token positions as the condition receiving \(P_B\), but its nine
additional tokens contain no information that varies across bases.

\paragraph{Shuffled matrix control.}
In the shuffled matrix control, each basis \(B\) is assigned a fixed
matrix
\[
\widetilde P_B\in\mathrm{GL}_3(\mathbb F_2)
\]
by randomly permuting the \(168\) basis matrices. For each experimental
seed, the permutation is chosen so that
\[
\widetilde P_B\neq P_B
\]
for every basis \(B\). Every matrix is used exactly once, and the
assignment remains fixed across all \(64\) examples associated with a
basis. The shuffled condition preserves the full set of \(168\) basis
matrices, but assigns each matrix to a basis other than its original
one.

\paragraph{Purpose of the controls.}
The controls test whether performance is affected by additional token positions, variation across bases, or the correct relationship between
each basis and its matrix.
The constant token condition examines whether the additional token
positions alone can produce similar behavior. The shuffled matrix
condition examines whether the correct relationship between a basis
and its matrix provides an advantage when the same collection of
matrices is assigned to different bases. These
comparisons constrain the possible explanations for the observed behavior, but they do not by themselves isolate a unique causal mechanism.


\subsection{Galois orbit identification}
\label{sec:global-orbit-identification}
The conditions using basis matrices test indirectly whether a model
can extract information about the Galois orbit from \(P_B\) and use it
for multiplication. To examine this question independently of the
multiplication task, we remove the operands and product target and
formulate Galois orbit identification as a classification problem:
\[
\begin{aligned}
\text{Input:}\quad
& P_B,\\
\text{Target:}\quad
& \ell_B\in\{1,\ldots,56\}.
\end{aligned}
\]
Each Galois orbit is assigned an arbitrary categorical label
$\ell_B$. The label is invariant under the Galois action:
\[
\ell_B
=
\ell_{\sigma(B)}
=
\ell_{\sigma^2(B)}.
\]

For each experimental seed, the model is trained on the two
training bases from every orbit and evaluated on the remaining held-out
basis. It receives $112$ labeled basis matrices for training
and $56$ basis matrices for testing. To succeed, the
model must assign each test basis the orbit label shared by the two
training bases from the same orbit.

Each Galois orbit is assigned an arbitrary class label. These labels do
not encode how the corresponding bases are related. This task tests whether
the model can transfer these orbit labels to the test bases. In the
following subsection, we separately examine whether the model can
recognize the relation between two bases from the same orbit.

\subsection{Pairwise orbit recognition}
\label{sec:pairwise-orbit-recognition}
The Galois orbit identification task requires the model to assign one of \(56\) Galois orbit labels to each basis. We also consider a relational formulation that asks only whether two bases
belong to the same Galois orbit:
\[
\text{Input: }(P_B,P_{B'}),\qquad
\text{Target: }
\mathbf 1\{\mathcal O_B=\mathcal O_{B'}\}.
\]

\paragraph{Training pairs.}
Let $B_j^{(1)}$ and $B_j^{(2)}$ denote the two training bases from orbit
$\mathcal O_j$. For each orbit, we construct the two ordered positive
pairs
\[
\left(P_{B_j^{(1)}},P_{B_j^{(2)}}\right)
\quad\text{and}\quad
\left(P_{B_j^{(2)}},P_{B_j^{(1)}}\right).
\]
These are the only positive training pairs, and neither pair is
duplicated through oversampling. We additionally construct two distinct
negative pairs per orbit by pairing a training basis from
$\mathcal O_j$ with a training basis from a different orbit. Through all $56$ orbits, the training set contains $112$ positive and
$112$ negative pairs.

\paragraph{Held-out test pairs.}
Let $B_j^{(\mathrm{test})}$ denote the held-out basis from orbit
$\mathcal O_j$. For each orbit, the positive test pairs are
\[
\left(P_{B_j^{(\mathrm{test})}},P_{B_j^{(1)}}\right)
\quad\text{and}\quad
\left(P_{B_j^{(\mathrm{test})}},P_{B_j^{(2)}}\right).
\]
We also construct two negative test pairs by pairing
$P_{B_j^{(\mathrm{test})}}$ with matrices of training bases sampled from
other orbits. The test set contains $112$ positive and $112$
negative pairs. Every test pair contains a held-out basis matrix that
does not occur in the training pairs.

Table~\ref{tab:pairwise-datasets} summarizes the construction and class
balance of the training and test pairs.

\begin{table}[t]
\centering
\caption{
Balanced pairwise orbit recognition datasets. Counts are aggregated
across the $56$ Galois orbits.
}
\label{tab:pairwise-datasets}
\begin{tabular}{lccc}
\toprule
\textbf{Split}
& \textbf{Positive}
& \textbf{Negative}
& \textbf{Total} \\
\midrule
Training
& $112$
& $112$
& $224$ \\
Test
& $112$
& $112$
& $224$ \\
\bottomrule
\end{tabular}
\end{table}

\paragraph{Evaluation.}
The training and test sets contain equal numbers of positive and
negative pairs, so random guessing gives an expected accuracy of
\(1/2\). We report overall test accuracy together with separate
accuracies for positive and negative pairs. So the results show whether errors
occur more often for one type of pair.

The task evaluates whether the model can recognize that a held-out
basis and a training basis belong to the same Galois orbit. It does not
require the model to predict a Galois orbit label or construct a
canonical representative.

\subsection{Learning the Galois action}
\label{sec:learning-galois-action}
The preceding tasks attempt to recover orbit information either through
a Galois orbit identifier or through pairwise orbit recognition. We
next take a different approach. Instead of predicting an invariant
directly, we train the model to predict the transformation between
equivalent basis representations. Specifically, given $P_B$, the model
predicts $P_{\sigma(B)}$, where $\sigma(x)=x^2$ is the Frobenius
automorphism:
\[
\begin{aligned}
\text{Input:}\quad
& P_B,\\
\text{Target:}\quad
& P_{\sigma(B)}.
\end{aligned}
\]

\paragraph{Training data.}
For each experimental seed, the model is trained to predict
\[
P_B\longmapsto P_{\sigma(B)}
\]
for each of the \(112\) training bases. The matrices of the \(56\)
held-out bases are not used as model inputs during training and are
introduced as inputs only during evaluation. A held-out basis matrix may still appear as the prediction target for
a training basis from the same orbit. The evaluation tests whether
the learned transformation can be applied to a new input matrix, not
whether the model can predict a matrix that has never appeared anywhere
in the training data. Because \(P_{\sigma(B)}\) is a \(3\times3\)
binary matrix, the model predicts its nine entries using a binary loss
for each entry.

\paragraph{One step evaluation on held-out bases.}
Let $\widehat{\sigma}$ denote the learned transformation. For each
held-out basis $B^{(\mathrm{test})}$, we compare
\[
\widehat{\sigma}
\left(P_{B^{(\mathrm{test})}}\right)
\]
with the exact target
\[
P_{\sigma(B^{(\mathrm{test})})}.
\]
We report both entrywise bit accuracy and exact-matrix accuracy. An
exact prediction is counted only when all nine matrix entries are
correct.

\paragraph{Composition and cycle closure.}
The Frobenius automorphism of $\mathbb F_8$ has order three, so applying
it three times returns to the original basis:
\[
\sigma^3(B)=B.
\]
Starting from each held-out predictor input, we apply the
learned transformation repeatedly:
\[
P_B
\longmapsto
\widehat{\sigma}(P_B)
\longmapsto
\widehat{\sigma}^{\,2}(P_B)
\longmapsto
\widehat{\sigma}^{\,3}(P_B).
\]
After the first step, the predicted matrix
\[
\widehat{\sigma}(P_B)
\]
is used as the input to the next step. It is not replaced by the exact
matrix \(P_{\sigma(B)}\). We evaluate two-step composition by testing whether
\[
\widehat{\sigma}^{\,2}(P_B)
=
P_{\sigma^2(B)},
\]
and evaluate three-step cycle closure by testing whether
\[
\widehat{\sigma}^{\,3}(P_B)
=
P_B.
\]

The first evaluation tests a single application of the learned
transformation. The remaining evaluations test whether repeated
applications reach the expected members of the Galois orbit and
return to the original basis after three applications.

\subsection{Canonicalization and downstream multiplication}
\label{sec:canonicalization}

The learned transformation \(\widehat{\sigma}\) is trained to map a
basis matrix \(P_B\) to the matrix \(P_{\sigma(B)}\) of the
Frobenius transformed basis. We use this transformation
to construct a single canonical representative for the orbit and test
whether it can support multiplication on held-out bases.

\paragraph{Ground truth and learned canonicalization.}
For a basis $B$, the exact Frobenius action generates
\[
P_B,\qquad
P_{\sigma(B)},\qquad
P_{\sigma^2(B)}.
\]
To compare the matrices lexicographically, we read the entries of each
matrix row by row and define the ground truth canonical representative
as
\[
C(B)
=
\min_{\mathrm{lex}}
\left\{
P_B,
P_{\sigma(B)},
P_{\sigma^2(B)}
\right\}.
\]
Because $\sigma$ only permutes the three matrices in this set, we have
\[
C(B)
=
C(\sigma(B))
=
C(\sigma^2(B)).
\]
Consequently, all bases in the same orbit have the same ground truth
canonical representative. Each distinct ground truth canonical representative is assigned a
unique identifier. We denote the identifier associated with \(C(B)\)
by $$c_B.$$

Note that we also have 
\[
c_B
=
c_{\sigma(B)}
=
c_{\sigma^2(B)}
\]
since \(C(B)\) is constant on each Galois orbit. 

We refer to \(c_B\) as the ground truth canonical identifier. This
construction provides the ground truth reference for the learned
canonicalization procedure below.

For learned canonicalization, we replace the exact Frobenius action
with the learned transformation $\widehat{\sigma}$ and define
\[
\widehat C(B)
=
\min_{\mathrm{lex}}
\left\{
P_B,
\widehat{\sigma}(P_B),
\widehat{\sigma}^{\,2}(P_B)
\right\}.
\]
Here,
\[
\widehat{\sigma}^{\,2}(P_B)
=
\widehat{\sigma}
\left(
\widehat{\sigma}(P_B)
\right),
\]
so the second predicted orbit member is obtained by feeding the first
prediction back into the same learned transformation. Unlike $C(B)$, the learned canonical representative
$\widehat C(B)$ is not guaranteed to be identical for all bases in the
same orbit.

\paragraph{Learned canonical identifiers and exact lookup.}

For each of the \(112\) training bases \(B\), the model produces
\[
\widehat{\sigma}(P_B)
\qquad\text{and}\qquad
\widehat{\sigma}^{\,2}(P_B).
\]
Outside the model, we apply the fixed lexicographic rule to
\[
P_B,\qquad
\widehat{\sigma}(P_B),\qquad
\widehat{\sigma}^{\,2}(P_B)
\]
to obtain the learned canonical representative \(\widehat C(B)\).

We assign a unique identifier to each distinct learned canonical
representative obtained from the training bases and store these
representatives and their identifiers in a lookup table. For a
training basis \(B\), we denote the identifier associated with
\(\widehat C(B)\) by
\[
\widehat c_B.
\]
Thus, for any two training bases \(B\) and \(B'\),
\[
\widehat c_B=\widehat c_{B'}
\quad\Longleftrightarrow\quad
\widehat C(B)=\widehat C(B').
\]
We refer to \(\widehat c_B\) as the learned canonical identifier.

For a held-out basis \(B\), we obtain \(\widehat C(B)\) using the same
procedure and compare it with the distinct learned canonical
representatives obtained from the training bases. A match requires
equality of all nine binary entries.

If an exact match is found, we assign the held-out basis the learned
canonical identifier associated with the matching training
representative. If no exact match is found, we count canonicalization
as a failure. No approximate matching is used in the primary
evaluation.

\paragraph{Downstream multiplication.}
The multiplication model is trained on the complete multiplication
tables of the $112$ training bases. Its input and target are
\[
\begin{aligned}
\text{Input:}\quad
& (\widehat c_B,x_B,y_B),\\
\text{Target:}\quad
& z_B=[xy]_B.
\end{aligned}
\]
At test time, we first use exact lookup to obtain a learned canonical
identifier for each held-out basis. If a match is found, the
corresponding identifier is provided to the multiplication model with \(x_B\) and \(y_B\). If no matching learned canonical
representative is found among the training bases, lookup fails and all
\(64\) multiplication examples from that basis are counted as incorrect.

The reported held-out accuracy includes errors from both stages.
Incorrect outputs from the multiplication model are counted as errors,
and all \(64\) examples from a basis are counted as incorrect when
exact lookup fails.

\paragraph{Exact evaluation and approximate recovery.}
For each orbit \(\mathcal O_j\), let \(B_j^{(1)}\) denote a fixed one
of the two training bases and let \(B_j^{(\mathrm{test})}\) denote the
held-out basis. The exact canonical match rate is the fraction of the
\(56\) orbits for which
\[
\widehat C\left(B_j^{(\mathrm{test})}\right)
=
\widehat C\left(B_j^{(1)}\right).
\]
The choice of \(B_j^{(1)}\) is made before evaluation and does not
depend on the resulting match.

This metric tests whether learned canonicalization produces exactly
the same representative for a held-out basis and a fixed training
basis from the same orbit. For downstream multiplication, the lookup instead compares the
held-out representative with the representatives obtained from all
training bases. We report training exact accuracy for the downstream
multiplication model and held-out exact accuracy for the complete
downstream procedure.

In addition to the primary exact lookup procedure, we evaluate a
nearest Hamming recovery rule as a secondary analysis. If exact lookup fails, we compare the
held-out representative with all representatives obtained from the
training bases and select the one that differs in the fewest binary
entries. The held-out basis is then assigned the identifier associated
with that representative. This recovery rule is evaluated separately. It is not used when
computing the primary exact canonical match rate or the primary
downstream multiplication accuracy.

\subsection{Decomposed multiplication pipeline}
\label{sec:decomposed-pipeline}
As a structured diagnostic, we use the decomposition of multiplication
given in Equation~\ref{eq:basis-decomposition}:
\[
m_B(u,v)
=
P_B^{-1}m_E(P_Bu,P_Bv).
\]
The decomposition first converts the two operands from the coordinates
of \(B\) to those of the reference basis \(E\), then performs
multiplication in \(E\), and finally converts the product back to the
coordinates of \(B\). This sequence of operations is specified
explicitly. It is not discovered by the model during training.

\paragraph{Learned components.}
We train two separate neural networks. The first network, which we call
the \emph{matrix vector module}, learns binary matrix vector
multiplication:
\[
\begin{aligned}
\text{Input:}\quad
& (M,u),\\
\text{Target:}\quad
& Mu,
\end{aligned}
\]
where $M\in\mathbb F_2^{3\times3}$,
$u\in\mathbb F_2^3$, and the multiplication is performed over
$\mathbb F_2$.

The second network, which we call the
\emph{reference basis multiplication module}, learns multiplication in $\mathbb{F}_8$ in the fixed basis $E$:
\[
\begin{aligned}
\text{Input:}\quad
& (a,b),\\
\text{Target:}\quad
& m_E(a,b),
\end{aligned}
\]
where $a,b\in\mathbb F_2^3$ are coordinate vectors in the reference
basis.

The matrix vector module is trained on all eight binary vectors for
each retained training matrix. The reference basis multiplication
module is trained on all $8^2=64$ ordered pairs in $\mathbb F_8$. For every basis matrix $P_B$, the inverse $P_B^{-1}$ is computed exactly
by a deterministic algebraic procedure over $\mathbb F_2$, so the matrix
inversion is not learned. The resulting matrices $P_B$ and
$P_B^{-1}$ are provided as inputs to the learned matrix vector module,
which predicts their action on coordinate vectors.

\paragraph{Matrix training with disjoint transformations.}
At test time, multiplication in a held-out basis \(B\) requires the
matrices
\[
P_B
\qquad\text{and}\qquad
P_B^{-1}.
\]
To prevent either matrix from appearing during training, we first
collect the basis matrices and their inverses from the \(112\) training
bases and remove duplicates. Next, we remove any matrix that is equal
to \(P_B\) or \(P_B^{-1}\) for one of the \(56\) held-out bases.
Consequently, neither matrix required for a held-out basis occurs in
the training data of the matrix vector module.

For every matrix that remains, the module is trained on all eight
vectors in \(\mathbb F_2^3\). Therefore, the vectors used during evaluation
have appeared during training, but the matrices required by the
held-out bases have not. The module for multiplication in the reference basis is
trained separately on the complete multiplication table associated
with \(E\).

\paragraph{Composition on held-out bases.}
For each held-out basis $B$, the matrix vector module first applies
$P_B$ to both input vectors:
\[
u_E=P_Bu,
\qquad
v_E=P_Bv.
\]
The reference basis multiplication module then predicts
\[
z_E=m_E(u_E,v_E),
\]
and the matrix vector module applies $P_B^{-1}$ to transform the
predicted product back to basis $B$:
\[
z_B=P_B^{-1}z_E.
\]
Both $P_B$ and $P_B^{-1}$ are absent from the training data of the
matrix vector module.

We evaluate the composed pipeline on all \(64\) multiplication examples
for each of the \(56\) held-out bases. We report bit accuracy and exact
output accuracy. An output is counted as exact only when all three
predicted bits are correct.

\paragraph{Interpretation.}
Unlike the preceding tasks, this experiment provides the algebraic
decomposition in advance and trains separate models for its two
operations. It tests whether these learned operations can be combined
to perform multiplication on held-out bases when the required
transformation matrices are excluded from training. However, it does not test
whether a model can discover the decomposition on its own.

\section{Experimental Setup}
\label{sec:experimental-setup}
We now describe the experimental implementation of the learning tasks
introduced in Section~\ref{sec:problem-setting}. We first specify how
the algebraic objects are represented and tokenized as model inputs,
and then describe the model architectures, training procedures, and
evaluation metrics used across the experiments. Unless otherwise stated, all experiments are conducted over
\(\mathbb F_8\) using the \(112/56\) split of bases defined in
Section~\ref{sec:train-test-split}. Additional training and evaluation rules for pair construction,
learned transformations, and decomposed computation are described in
the relevant subsections.

\subsection{Input representation and tokenization}
\label{sec:tokenization}
All model inputs are represented as token sequences. For the
multiplication tasks, each coordinate vector
\[
x_B,y_B\in\mathbb F_2^3
\]
is encoded as a single token with one of eight possible values. Thus,
each operand occupies one token position rather than three separate
binary token positions. The target
\[
z_B=[xy]_B
\]
is represented as an output with eight possible classes.

The Galois orbit label
\[
\ell_B\in\{1,\ldots,56\}
\]
is represented by a single categorical token. Because the label is
shared by the three bases in an orbit, the same token is used for all
bases that induce the same coordinate multiplication map.

For downstream multiplication, each learned canonical identifier
\(\widehat c_B\) is represented as a single token.

\paragraph{Basis matrix tokens and controls.}
The basis matrix
\[
P_B\in\mathrm{GL}_3(\mathbb F_2)
\]
is flattened in a fixed row major order:
\[
P_B
\longmapsto
(p_1,\ldots,p_9),
\qquad
p_i\in\mathbb F_2.
\]
Each matrix entry occupies one binary token position. Thus, $P_B$ is
represented as nine binary tokens rather than as a single categorical
matrix token or a continuous feature vector.

The constant token control uses nine zeros in the same positions, while
the shuffled matrix control uses the nine entries of
\(\widetilde P_B\). The complete input sequences for all conditions are
summarized in Table~\ref{tab:multiplication-conditions}.

\paragraph{Galois action prediction.}
For learning the Frobenius action, the nine binary entries of $P_B$
form the input sequence. The prediction target consists of the nine
binary entries of $P_{\sigma(B)}$. Both the input and output hold the entrywise binary structure of the basis matrices.

\paragraph{Decomposed components.}
For the learned matrix vector module, the nine entries of
$M\in\mathbb F_2^{3\times3}$ and the three entries of
$u\in\mathbb F_2^3$ form a sequence of $12$ binary tokens. The model predicts the three binary entries of $Mu$.

For the learned reference basis multiplication module, each operand is
a vector in \(\mathbb F_2^3\), and its three entries are represented
as separate binary tokens. The two operands hence form a sequence
of six tokens. The model predicts the three binary entries of their product in the fixed
reference basis $E$.

\subsection{Model architecture and training}
\label{sec:model-training}
All learning tasks use the same Transformer backbone with two layers
of self attention, model dimension
\[
d_{\mathrm{model}}=32,
\]
and feedforward dimension
\[
d_{\mathrm{FF}}=64.
\]
Each layer applies layer normalization before single head scaled dot
product self attention and before a feedforward network that operates
independently at each token position. A residual connection surrounds
each of the two components.

After the Transformer layers, the token representations are flattened
and passed through a readout network with one hidden layer, followed by
an output head chosen for each task. The Transformer dimensions remain
fixed across tasks, while the readout width, output head, training
budget, and batch size vary by task. Table~\ref{tab:model-config}
summarizes the model and training configurations.

\begin{table*}[t]
\centering
\caption{
Model and training configurations used in the main experiments.
All models use an initial learning rate of $3\times10^{-3}$.
}
\label{tab:model-config}
\small
\begin{tabular}{lccccccc}
\toprule
\textbf{Task/condition}
& $\mathbf{L}$
& $\mathbf{d_{\mathrm{model}}}$
& $\mathbf{d_{\mathrm{FF}}}$
& \textbf{Hidden}
& \textbf{Epochs}
& \textbf{Batch}
& \textbf{Seeds} \\
\midrule

\multicolumn{8}{l}{\textit{Basis conditioned multiplication}} \\

\quad Orbit label, no $P_B$
& 2 & 32 & 64 & 128 & 150 & 128 & 5 \\

\quad No orbit label, no $P_B$
& 2 & 32 & 64 & 128 & 50 & 128 & 5 \\

\quad $P_B$, no orbit label
& 2 & 32 & 64 & 128 & 800 & 128 & 8 \\

\quad Orbit label $+\,P_B$
& 2 & 32 & 64 & 128 & 400 & 128 & 8 \\

\quad Orbit label $+$ constant tokens
& 2 & 32 & 64 & 128 & 400 & 128 & 8 \\

\quad Orbit label $+$ shuffled matrix
& 2 & 32 & 64 & 128 & 400 & 128 & 8 \\

\midrule

Galois orbit identification
& 2 & 32 & 64 & 64 & 300 & 32 & 5 \\

Pairwise orbit recognition
& 2 & 32 & 64 & 64 & 200 & 64 & 5 \\

Galois action prediction
& 2 & 32 & 64 & 64 & 150 & 32 & 5 \\

Downstream multiplication
& 2 & 32 & 64 & 64 & 150 & 64 & 5 \\

Matrix-vector module
& 2 & 32 & 64 & 64 & 60 & 64 & 5 \\

Reference-basis multiplication module
& 2 & 32 & 64 & 64 & 200 & 64 & 5 \\

\bottomrule
\end{tabular}
\end{table*}

\paragraph{Parameter counts across input conditions.}
For basis conditioned multiplication, the Transformer backbone and
hidden readout width are held fixed across the six primary input
conditions. The token representations are flattened before entering the readout
layer, so the readout input dimension depends on the sequence length.
Conditions with different sequence lengths thus have different
total parameter counts. The three conditions that combine the Galois
orbit label with \(P_B\), constant tokens, or \(\widetilde P_B\) all
use sequences of length \(12\) and have identical architectures and
parameter counts.

\paragraph{Output heads and losses.}
Basis conditioned multiplication and downstream multiplication use
eight output classes with softmax cross entropy loss. Galois orbit
identification uses \(56\) output classes, while pairwise orbit
recognition uses a single binary logit.

Prediction of the Galois action uses nine binary logits corresponding
to the entries of \(P_{\sigma(B)}\). The matrix vector module and the
reference basis multiplication module each use three binary logits.
All tasks that predict binary entries are trained using binary cross
entropy applied to the logits.

\paragraph{Optimization.}
All models are trained with Adam using an initial learning rate of
\(3\times10^{-3}\). For basis conditioned multiplication, Galois orbit
identification, pairwise orbit recognition, and downstream
multiplication using learned canonicalization, the learning rate is
multiplied by \(0.3\) after one half and three quarters of the training
epochs. For prediction of the Galois action and the two learned
components of the decomposed pipeline, the learning rate is multiplied
by \(0.3\) only after one half of the training epochs.

All models are trained for the complete number of epochs listed in
Table~\ref{tab:model-config} without early stopping. We report the
metrics from the final epoch for every seed.

\subsection{Evaluation metrics}
\label{sec:evaluation}
For each experiment, we record the final metrics and training history
for every seed. Final results are summarized using the mean and population standard
deviation across seeds. The learning curve figures show a separate
trajectory for each seed without averaging across runs.

\paragraph{Multiplication accuracy.}
Exact accuracy is the primary metric for basis conditioned
multiplication and downstream multiplication. A prediction is counted
as correct only when the predicted field element agrees exactly with
the target.

For the decomposed multiplication pipeline, the output is represented
by three binary entries. We report both entrywise bit accuracy
and exact output accuracy. An output is counted as exact only when all
three predicted bits are correct.

For downstream multiplication using learned canonicalization, a
held-out basis whose learned canonical representative has no exact
match among the training representatives contributes \(64\) incorrect
predictions. The reported test accuracy includes both
multiplication errors and failures of exact lookup.

\paragraph{Baseline using the operands alone.}
When neither the Galois orbit label nor the basis matrix is provided,
the model receives only \((x_B,y_B)\). We compute a modal baseline by
predicting the most frequent target for each operand pair across the
training bases. This calculation gives an exact accuracy of \(0.3438\)
for the \(\mathbb F_8\) datasets, well above \(1/8\) for uniform guessing.

\paragraph{Metrics for orbit recognition.}
Galois orbit identification is evaluated using classification accuracy
over the \(56\) Galois orbit identifiers.

For pairwise orbit recognition, we report overall test accuracy
together with separate accuracies for positive and negative pairs.
Positive pairs contain two bases from the same orbit, while negative
pairs contain bases from different orbits. Because the test set
contains equal numbers of positive and negative pairs, random guessing
has an expected accuracy of \(1/2\). The separate accuracies show
whether the model performs differently on the two types of pairs.

\paragraph{Metrics for prediction of the Galois action.}
The learned transformation $\widehat{\sigma}$ is evaluated on the $56$
held-out predictor inputs. 
For each held-out basis \(B\), we apply the learned transformation
repeatedly and compare the result after \(k\) steps with
\[
P_{\sigma^k(B)},
\qquad
k=1,2,3.
\]
The exact accuracy at step \(k\) is the fraction of held-out bases for
which
\[
\widehat{\sigma}^{\,k}(P_B)
=
P_{\sigma^k(B)}.
\]
The cases \(k=1\), \(k=2\), and \(k=3\) correspond to prediction after
one application, composition after two applications, and cycle closure
after three applications. Since \(\sigma^3(B)=B\), the cycle closure
condition is
\[
\widehat{\sigma}^{\,3}(P_B)=P_B.
\]

For \(k=1\), we additionally report bit accuracy, defined as the
fraction of the nine predicted matrix entries that agree with the
corresponding entries of \(P_{\sigma(B)}\). 

Accuracy is evaluated
separately at each step, so correctness at step \(k\) does not require
the preceding predictions to have been correct.

For the two-step and three-step metrics, each prediction is fed back
into the same learned transformation. The exact intermediate matrices
are not supplied.

\paragraph{Metrics for canonicalization and downstream multiplication.}
The primary canonicalization metric is the exact canonical match rate
defined in Section~\ref{sec:canonicalization}. A held-out basis is
counted as a match only when all nine entries of its learned canonical
representative agree with those of the representative obtained from
the fixed training basis in the same orbit.

We also report downstream multiplication accuracy using learned
canonical identifiers obtained through exact lookup. Approximate
recovery based on Hamming distance is reported separately and is not
used in the primary canonical match rate or the primary downstream
multiplication accuracy.

\section{Experimental Results and Analysis}
\label{sec:results}
We now report model performance on held-out basis representations using
the experimental protocols defined in
Sections~\ref{sec:problem-setting} and
\ref{sec:experimental-setup}.  We first examine how different forms of
basis information affect multiplication and then compare Galois orbit
identification with pairwise orbit recognition. We next evaluate the
decomposed multiplication pipeline under strict separation between the
matrices used for training and those required during evaluation.
Finally, we test whether the learned Galois action generalizes to
held-out predictor inputs and whether it supports exact
canonicalization and downstream multiplication.

\subsection{Results for basis conditioned multiplication}
\label{sec:results-basis-conditioning}

Table~\ref{tab:basis-conditioning-results} reports the final exact
accuracy for the six conditions using different forms of information
about the basis.

\begin{table}[t]
\centering
\caption{
Final exact accuracy for basis conditioned multiplication. Values are
the mean and population standard deviation across experimental seeds.
}
\label{tab:basis-conditioning-results}
\small
\begin{tabular}{lccc}
\toprule
Condition
& Seeds
& Training exact
& Held-out exact \\
\midrule
Orbit label, no \(P_B\)
& \(5\)
& \(1.0000 \pm 0.0000\)
& \(1.0000 \pm 0.0000\) \\

No orbit label, no \(P_B\)
& \(5\)
& \(0.3438 \pm 0.0000\)
& \(0.3438 \pm 0.0000\) \\

\(P_B\), no orbit label
& \(8\)
& \(0.3418 \pm 0.0052\)
& \(0.3419 \pm 0.0050\) \\

Orbit label \(+\,P_B\)
& \(8\)
& \(0.7295 \pm 0.3038\)
& \(0.6795 \pm 0.2659\) \\

Orbit label \(+\) constant tokens
& \(8\)
& \(0.5582 \pm 0.2850\)
& \(0.5582 \pm 0.2850\) \\

Orbit label \(+\) shuffled matrix
& \(8\)
& \(0.6130 \pm 0.2888\)
& \(0.5808 \pm 0.2559\) \\
\bottomrule
\end{tabular}
\end{table}

\paragraph{Galois orbit label and basis information.}
When the model receives the Galois orbit label with the two operands,
but without \(P_B\), it achieves perfect training and held-out
accuracy. This result follows from the fact that the three bases in
each orbit share the same coordinate multiplication table. Because the
complete tables of the two training bases are included, every input
\[
(\ell_B,x_B,y_B)
\]
used for a held-out basis also occurs during training with the same
target. Testing changes the basis representation but does not introduce
a new multiplication rule.

In contrast, when neither the orbit label nor \(P_B\) is provided, the
model receives only
\[
(x_B,y_B).
\]
The same operand pair can have different targets in different Galois orbits, but the operands alone do not identify the relevant multiplication rule. The trained model reaches an exact accuracy of
\(0.3438\) for all five seeds, equal to the accuracy obtained by
predicting the most frequent target for each operand pair.

Providing \(P_B\) without the orbit label produces nearly the same
result. Although \(P_B\) uniquely specifies the basis, both training
and held-out exact accuracy remain close to \(0.3438\). Under the
present architecture and training protocol, the model does not
reliably figure out the corresponding multiplication table from \(P_B\).
This means that the model does not successfully use the mathematical
information contained in the basis matrix under this condition.

\paragraph{Adding the basis matrix to the orbit label.}
When the model receives both the Galois orbit label and \(P_B\), it
reaches \(0.7295\pm0.3038\) training exact accuracy and
\(0.6795\pm0.2659\) held-out exact accuracy. This is significantly
lower and more variable than the perfect accuracy obtained with the
orbit label and operands alone.

The orbit label already identifies the multiplication table, so
\(P_B\) is mathematically redundant in this condition. However, the
difference in performance cannot be attributed to the information in
\(P_B\) alone. Adding its nine entries increases the sequence length
from \(3\) to \(12\). Because the token representations are flattened
before the readout layer, it also changes the readout input dimension
and the total parameter count. We next compare this condition
with two controls that have the same sequence length and architecture.

\paragraph{Constant and shuffled matrix controls.}
The constant token condition replaces the nine entries of \(P_B\) with
zeros. The shuffled matrix condition instead uses the entries of
\(\widetilde P_B\). It uses the same collection of matrices as the
\(P_B\) condition but breaks the correct relationship between each
basis and its matrix.

The constant token condition reaches \(0.5582\pm0.2850\) on both the
training and held-out sets. The shuffled matrix condition reaches
\(0.6130\pm0.2888\) in training and \(0.5808\pm0.2559\) on held-out
bases. The true \(P_B\) condition has the highest mean held-out
accuracy, exceeding the constant token condition by approximately
\(0.12\) and the shuffled matrix condition by approximately \(0.10\).
These differences are small relative to the variation across seeds,
and strong runs also occur in both control conditions. Hence, the results do not establish that the correct algebraic relationship
between a basis and \(P_B\) provides a solid advantage.

In the constant token condition, every held-out input and its target
already occur in the training data. This explains why training and held-out accuracy are identical within each seed. Variation across seeds reflects different optimization outcomes, but not a change in the
inputs encountered during evaluation. In the conditions using \(P_B\) or \(\widetilde P_B\), the matrix used
as input for a held-out basis differs from those used for the training
bases in the same orbit. Evaluation consequently includes input
sequences that do not occur during training. Training accuracy also
varies widely across seeds, so the instability cannot be attributed
solely to the new inputs introduced during evaluation.

\paragraph{Learning dynamics across seeds.}
Figure~\ref{fig:length-matched-test-curves} shows held-out exact
accuracy during training for the three conditions with equal sequence
length. All three conditions vary significantly across seeds. Some
runs remain near the floor of \(0.3438\), while others reach
intermediate or nearly perfect accuracy. Several curves remain at the
floor and overlap visually. In the condition using the true basis matrix \(P_B\), one run initially
reaches high accuracy but later returns to the floor. Thus, strong
performance during an intermediate stage of training does not
necessarily persist to the final epoch.

\begin{figure*}[t]
    \centering
    \includegraphics[width=\textwidth]
    {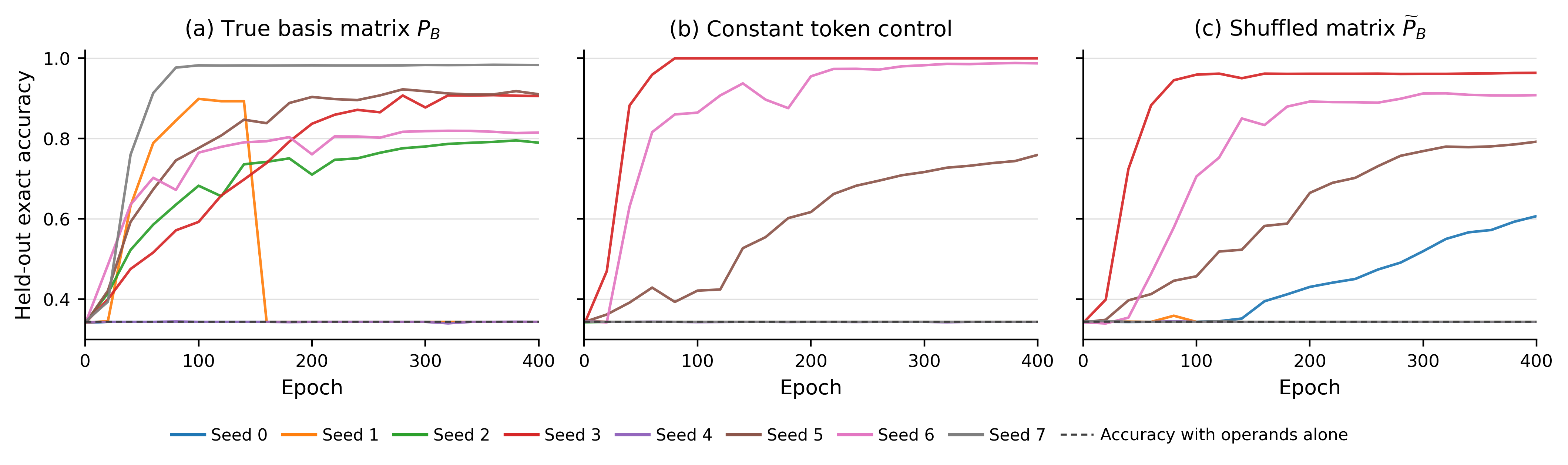}
    \caption{
    Held-out exact accuracy during training for the three conditions with
equal sequence length. The dotted
horizontal line marks the floor of \(0.3438\). Several curves remain
at the floor and overlap visually
    }
    \label{fig:length-matched-test-curves}
\end{figure*}

\paragraph{Summary.}
In summary, the results show that the Galois orbit label alone
supports perfect held-out performance, whereas adding nine token
positions produces highly variable outcomes. The true \(P_B\)
condition has the highest mean among the three conditions with equal
sequence length, but strong and weak runs also occur with constant
tokens and shuffled matrices. Thus, the results do not establish
that the correct algebraic relationship between \(P_B\) and its basis
provides a reliable advantage. The controls rule out some simple explanations, but they do not show
exactly why performance varies across seeds and conditions. Complete histories of loss, training accuracy, and held-out accuracy
for every seed are provided in
Appendix~\ref{app:f8-basis-curves}.

\subsection{Results for orbit recognition}
\label{sec:results-orbit-recognition}

We next evaluate whether models can recognize Galois orbit structure
in held-out basis representations. Table~\ref{tab:orbit-recognition-results}
compares Galois orbit identification with pairwise orbit recognition.
For the pairwise task, the table reports both overall accuracy and the
separate accuracies on positive and negative held-out pairs. 

\begin{table*}[t]
\centering
\caption{
Galois orbit identification and pairwise orbit recognition. Values are
means and population standard deviations across five experimental
seeds.
}
\label{tab:orbit-recognition-results}
\small
\begin{tabular}{lcccc}
\toprule
Task
& Overall train
& Overall held-out
& Positive held-out
& Negative held-out \\
\midrule
Galois orbit identification
& $1.0000 \pm 0.0000$
& $0.0179 \pm 0.0196$
& ---
& --- \\

Pairwise orbit recognition
& $1.0000 \pm 0.0000$
& $0.8688 \pm 0.0362$
& $0.9286 \pm 0.0196$
& $0.8089 \pm 0.0546$ \\
\bottomrule
\end{tabular}
\end{table*}

\paragraph{Galois orbit identification.}
The Galois orbit identification model reaches perfect training
accuracy but obtains only \(0.0179\pm0.0196\) accuracy on held-out
bases, approximately equal to the random guessing level of
\(1/56\approx0.0179\). The model fits the Galois orbit label of the training bases but
does not assign the same label to the held-out bases. This occurs
even though the orbit of every held-out basis contains two training
bases with the same label.

\paragraph{Pairwise orbit recognition.}
The pairwise model is trained and evaluated using balanced sets of
unique positive and negative ordered pairs. It reaches perfect overall
training accuracy and achieves \(0.8688\pm0.0362\) overall accuracy on
held-out pairs, considerably above the chance level of \(0.5\). On the held-out set, accuracy is \(0.9286\pm0.0196\) for positive pairs
and \(0.8089\pm0.0546\) for negative pairs.

This shows that the model recognizes most pairs containing a held-out basis and a
training basis from the same orbit. Its lower accuracy on negative
pairs indicates that more errors occur when distinguishing bases from
different orbits than when identifying bases from the same orbit.

To summarize, these results show that the failure of Galois orbit identification does not imply that orbit structure is inaccessible to the model. The model fails to assign the fixed Galois orbit identifiers
to held-out bases, but it can recognize whether a held-out basis and a
training basis belong to the same orbit. The ability to transfer orbit information depends strongly on the
learning objective.

\subsection{Results for the decomposed multiplication pipeline}
\label{sec:results-decomposed-pipeline}
We next report the results for the decomposed multiplication pipeline
defined in Section~\ref{sec:decomposed-pipeline}. For every held-out basis $B$,
neither $P_B$ nor $P_B^{-1}$ appears in the training data of the
matrix vector module. Table~\ref{tab:decomposed-pipeline-results} reports the resulting
held-out multiplication performance.

\begin{table}[t]
\centering
\caption{
Performance of the decomposed multiplication pipeline on \(56\)
held-out bases under strict separation. The final row reports the mean
and population standard deviation across five seeds.
}
\label{tab:decomposed-pipeline-results}
\small
\begin{tabular}{ccc}
\toprule
Seed
& Bit accuracy
& Exact accuracy \\
\midrule
$0$
& $0.9820$
& $0.9637$ \\

$1$
& $0.8582$
& $0.7221$ \\

$2$
& $0.9659$
& $0.9286$ \\

$3$
& $0.9740$
& $0.9436$ \\

$4$
& $1.0000$
& $1.0000$ \\
\midrule
Mean $\pm$ SD
& $0.9560 \pm 0.0502$
& $0.9116 \pm 0.0977$ \\
\bottomrule
\end{tabular}
\end{table}

\paragraph{Held-out multiplication.}
The decomposed pipeline reaches an average bit accuracy of
\(0.9560\pm0.0502\) and an average exact accuracy of
\(0.9116\pm0.0977\) on held-out bases. Performance varies across
seeds. Four of the five runs obtain exact accuracy above \(0.92\),
including one run with perfect exact accuracy. But Seed~1 only reaches
\(0.7221\). The lower exact accuracy relative to bit accuracy reflects
the stricter exact criterion, which requires all three predicted output
bits to be correct simultaneously.

\paragraph{Effect of the transform-disjoint protocol.}
For every held-out basis \(B\), the matrices \(P_B\) and \(P_B^{-1}\)
are new inputs to the matrix vector module because neither matrix is
used during its training. The strong held-out performance of the
complete pipeline does not depend on direct training with
these particular matrices. The result is consistent with the matrix
vector module learning an operation that transfers to matrices outside
its training set.

The matrix vector module is used three times for every multiplication
example. An error in any of these applications can affect the final
output, which provides one possible explanation for the observed
variation from seed to seed.

\subsection{Held-out evaluation of the learned Galois action}
\label{sec:results-galois-action}
We evaluate the learned transformation \(\widehat{\sigma}\) on the
\(56\) held-out basis matrices that are not used as predictor inputs
during training. Table~\ref{tab:galois-action-results} reports
prediction after one application, composition after two applications,
and cycle closure after three applications for each experimental seed.

\begin{table*}[t]
\centering
\caption{
Prediction, composition, and cycle closure accuracies of the learned
Galois action on the \(56\) held-out predictor inputs. The final row
reports the mean and population standard deviation across five
experimental seeds.
}
\label{tab:galois-action-results}
\small
\begin{tabular}{ccccc}
\toprule
Seed
& One-step bit
& One-step exact
& Two-step exact
& Three-step closure \\
\midrule
$0$
& $0.9980$
& $0.9821$
& $0.9821$
& $0.9821$ \\

$1$
& $1.0000$
& $1.0000$
& $1.0000$
& $1.0000$ \\

$2$
& $1.0000$
& $1.0000$
& $1.0000$
& $1.0000$ \\

$3$
& $1.0000$
& $1.0000$
& $1.0000$
& $1.0000$ \\

$4$
& $1.0000$
& $1.0000$
& $1.0000$
& $1.0000$ \\
\midrule
Mean $\pm$ SD
& $0.9996 \pm 0.0008$
& $0.9964 \pm 0.0071$
& $0.9964 \pm 0.0071$
& $0.9964 \pm 0.0071$ \\
\bottomrule
\end{tabular}
\end{table*}

\paragraph{Prediction after one application.}
On held-out predictor inputs, the learned transformation achieves
\(0.9996\pm0.0008\) bit accuracy and \(0.9964\pm0.0071\) exact matrix
accuracy. Four seeds predict the transformed matrix exactly for all
\(56\) held-out bases. For Seed~0, \(55\) of the \(56\) matrices are
predicted exactly, with only one incorrect matrix entry across the
complete held-out set.

The learned transformation remains accurate when held-out basis
matrices are used as predictor inputs. These matrices are excluded from
the training inputs, although some may appear as prediction targets
during training.

\paragraph{Composition and cycle closure.}
Applying the same learned transformation twice gives an exact accuracy
of \(0.9964\pm0.0071\). Applying it three times gives the same cycle
closure accuracy of \(0.9964\pm0.0071\). Four seeds satisfy both
diagnostics for every held-out basis, while Seed~0 succeeds for
\(55\) of the \(56\) held-out bases.

The equality of the results after one, two, and three applications
shows that repeated use of the transformation introduces no additional
failures beyond the single initial error in Seed~0. The learned
transformation reproduces the expected cycle of length three
almost perfectly on held-out predictor inputs. These results support
its use in learned canonicalization, which requires both
\(\widehat{\sigma}(P_B)\) and
\(\widehat{\sigma}^{\,2}(P_B)\).

\subsection{Results for canonicalization and downstream multiplication}
\label{sec:results-canonicalization}
We examine whether learned canonicalization maps training and held-out
bases from the same Galois orbit to identical representatives and
whether identifiers assigned through exact lookup support downstream
multiplication. Table~\ref{tab:canonicalization-results} presents the
primary exact matching results together with a secondary recovery
result based on Hamming distance.

\begin{table*}[t]
\centering
\caption{
Canonicalization and downstream multiplication results across five
experimental seeds. The final column reports the mean and population
standard deviation.
}
\label{tab:canonicalization-results}
\small
\begin{tabular}{lcccccc}
\toprule
Metric
& Seed 0
& Seed 1
& Seed 2
& Seed 3
& Seed 4
& Mean $\pm$ SD \\
\midrule
\multicolumn{7}{l}{\textit{Canonicalization}} \\
Direct exact canonical match
& $0.9821$
& $1.0000$
& $1.0000$
& $1.0000$
& $1.0000$
& $0.9964 \pm 0.0071$ \\
\addlinespace

\multicolumn{7}{l}{\textit{Downstream multiplication}} \\
Training exact accuracy
& $0.9992$
& $1.0000$
& $0.9983$
& $0.9981$
& $1.0000$
& $0.9991 \pm 0.0008$ \\

Held-out exact accuracy
& $0.9869$
& $1.0000$
& $0.9983$
& $0.9981$
& $1.0000$
& $0.9967 \pm 0.0049$ \\

Held-out exact with recovery
& $0.9869$
& $1.0000$
& $0.9983$
& $0.9981$
& $1.0000$
& $0.9967 \pm 0.0049$ \\
\bottomrule
\end{tabular}
\end{table*}

\paragraph{Exact canonical matching.}
For each Galois orbit, we compare the learned canonical representative
of the held-out basis with that of a fixed training basis from the same
orbit. An exact match requires all nine binary entries to be equal.

The mean exact canonical match rate is \(0.9964\pm0.0071\). All \(56\)
orbits match for four seeds, while \(55\) of the \(56\) orbits match
for Seed~0. In total, learned canonicalization produces identical
representatives in \(279\) of the \(280\) orbit evaluations.

No approximate comparison is used in this result. The near perfect
match rate shows that the predictions of the learned transformation,
followed by the fixed lexicographic rule outside the model, almost
always produce the same canonical representative for a training basis
and a held-out basis from the same orbit.

\paragraph{Exact lookup and downstream multiplication.}
The downstream multiplication model reaches
\(0.9991\pm0.0008\) exact accuracy on the training bases and
\(0.9967\pm0.0049\) exact accuracy on the held-out bases. The held-out
result includes both multiplication errors and failures of exact
lookup. As defined in Section~\ref{sec:canonicalization}, a failure of exact
lookup causes all \(64\) multiplication examples from that held-out
basis to be counted as incorrect.

The exact canonical match rate and downstream lookup measure different
events. The former compares each held-out representative
with one fixed training representative from the same orbit. The latter instead searches all representatives obtained from the training
bases. A held-out representative may fail the fixed comparison but
still match another stored representative. The multiplication model
may also make errors after a successful lookup. For these reasons, downstream multiplication accuracy need not equal
the exact canonical match rate.

The near perfect downstream multiplication accuracy shows an important
difference between predicting an invariant directly and constructing
one from a learned transformation. Galois orbit identification asks the
model to map a single basis matrix to one of \(56\) arbitrary orbit
labels. These labels do not encode how bases in the same orbit are
related, and this prediction remains near chance on held-out bases.

By contrast, learned canonicalization asks the model
to predict the Frobenius action, which provides a common rule for moving between basis representations in every orbit and can be applied
repeatedly. A fixed procedure then converts the predicted orbit members
into a canonical representative and assigns an identifier through exact
matching. The model learns the structured relation between equivalent
representations, and the pipeline handles the final symbolic
assignment. This division explains why
learned canonicalization supports almost perfect downstream
multiplication even though direct Galois orbit identification does not
transfer.

\paragraph{Recovery using Hamming distance.}
We also evaluate a recovery procedure based on Hamming distance. When exact lookup fails, this procedure assigns the identifier of the stored training representative with the smallest Hamming distance. This rule yields
held-out exact accuracy of $0.9967 \pm 0.0049$, identical to the primary result for every seed. Because the primary accuracy is already close to perfect, there is little room for improvement. Within this limited margin, recovery based on Hamming distance produces no measurable gain.

Most importantly, the primary canonicalization and downstream
multiplication results do not depend on this recovery procedure. The
exact canonical match rate requires exact equality, and the primary
downstream evaluation counts every unresolved lookup failure as
incorrect.

\paragraph{Extension to $\mathbb F_{16}$.}
Appendix~\ref{app:f16} repeats the experiments on a fixed sample of
$50$ Galois orbits in $\mathbb F_{16}$. The extension preserves the
strong transfer of pairwise orbit recognition and the effectiveness of
an explicit Galois orbit label. However, the decomposed multiplication
pipeline, learned Galois action, and learned canonicalization are less
stable than in $\mathbb F_8$. Entrywise prediction of the Galois action
remains strong, but exact matrix prediction, repeated composition,
cycle closure, and primary downstream multiplication are markedly less
reliable. Ground truth canonicalization continues to support accurate
downstream multiplication. These results suggest that canonicalization remains effective in
$\mathbb F_{16}$ when the exact Galois structure is available. The
difficulty lies in predicting the Galois action accurately enough to
construct a reliable canonical representative.

\section{Discussion}
\label{sec:discussion}

\paragraph{Galois orbit label \(\ell_B\) versus learned canonical identifier
\(\widehat c_B\).}
In downstream multiplication, the learned canonical identifier
\(\widehat c_B\) serves the same functional role as the supplied Galois
orbit label \(\ell_B\). They both indicate which coordinate
multiplication table should be used. Their difference lies in how they are
obtained. The label \(\ell_B\) is defined from the known Galois orbit
and supplied directly to the multiplication model. Its availability
for a held-out basis is part of the experimental condition. By
contrast, \(\widehat c_B\) is constructed from predictions made on the
held-out basis. The model predicts the Frobenius action, after which
lexicographic selection and exact lookup are performed by fixed
procedures outside the model.

This distinction helps explain the contrast with Galois orbit
identification. Direct prediction of the Galois orbit label from a
held-out basis matrix remains near the chance level in
\(\mathbb F_8\). The canonicalization pipeline does not predict this
label directly. It predicts the structured transformation between
bases in the same Galois orbit and uses the resulting canonical
representative to retrieve an identifier through exact matching. In
\(\mathbb F_8\), this constructed identifier transfers almost
perfectly to held-out bases and supports near perfect downstream
multiplication.

\paragraph{Exact construction and its limitations.}
The \(\mathbb F_8\) result shows that a learned algebraic
transformation can be combined with a deterministic procedure to
construct an effective invariant. Generalization occurs in the
prediction of the Frobenius action on held-out basis matrices. The
subsequent construction of the representative and assignment of its
identifier are not additional learned predictions. The result does
not imply that the model discovers the canonicalization rule itself.

This construction is also sensitive to prediction errors. Exact lookup
requires equality of every matrix entry, and an incorrect prediction
may change the selected representative and prevent a match. In addition, unlike ground truth canonicalization, learned
canonicalization is not guaranteed to produce the same representative
for every basis in a Galois orbit. These limitations become more visible in
\(\mathbb F_{16}\), where entrywise prediction remains strong but exact
matrix prediction and canonical matching are less reliable. Learned
canonicalization still supports useful downstream multiplication, but
its performance is lower and more variable than in
\(\mathbb F_8\). The lexicographic rule is a fixed and reproducible
selection convention. It gives the selected matrix no additional
algebraic significance.

\paragraph{More direct mathematical information does not guarantee better learning.}
The basis conditioned multiplication experiment shows that providing
more mathematically informative input does not necessarily improve
learning. The Galois orbit label alone identifies the relevant
multiplication rule and supports perfect accuracy. The basis matrix
\(P_B\) contains additional information that uniquely determines the
basis and is algebraically related to the multiplication map. Even so, providing \(P_B\) without the orbit label does not support
reliable learning, and adding \(P_B\) to the orbit label makes the
result markedly less stable than using the label alone. The true
basis matrix also does not show a clear advantage over constant tokens
or shuffled matrices. Its mean held-out accuracy is higher, but the
variation is large and the results overlap across seeds.

\paragraph{Scope.}
The experiments study transfer across basis representations, not
generalization to unseen multiplication rules. Every held-out basis
belongs to a Galois orbit represented by training bases and shares its
coordinate multiplication table with those bases. The test set
introduces a basis representation that is not used for training, while
the corresponding multiplication rule remains present in the training
data. Accordingly, the conclusions concern transfer among
mathematically equivalent representations of a known rule.

\paragraph{Limitations.}
The evaluation of the learned Galois action uses a specific form of
data separation. A held-out basis matrix is excluded as a predictor
input during training, but it may appear as the prediction target for a
training basis from the same orbit. Hence, the results establish
generalization to held-out predictor inputs, not complete exclusion of
the held-out matrix from all roles in the training data.

The $\mathbb F_{16}$ experiments also show that the almost perfect
constructive results obtained in $\mathbb F_8$ do not transfer
unchanged to the larger field. Explicit Galois orbit labels and
pairwise orbit recognition remain effective, but the decomposed
pipeline, exact prediction of the Galois action, repeated composition,
and learned canonicalization become less reliable. Moreover, the
$\mathbb F_{16}$ experiments use a fixed sample of \(50\) Galois
orbits and do not provide exhaustive orbit coverage.

Finally, the experiments are limited to multiplication in small binary
finite fields, one model family, and controlled splits based on Galois
orbits. Several conditions also exhibit substantial variation across
experimental seeds. Future work could examine larger fields, broader
model families, training objectives that explicitly encourage
composition and cycle closure, and settings in which both the basis
representation and the underlying multiplication rule are unseen.

\section{Conclusion}
\label{sec:conclusion}
We studied whether neural models can generalize multiplication in finite fields
across mathematically equivalent basis representations. 
In $\mathbb F_8$, an explicit Galois orbit label supports perfect
transfer because training and held-out bases from the same Galois orbit
share a coordinate multiplication table. Galois orbit identification fails to generalize to held-out basis representations,
while pairwise orbit recognition transfers successfully.

In the main $\mathbb F_8$ experiments, the underlying algebraic
operations can also be learned constructively. Under strict separation
of the transformations used for training and evaluation, the
decomposed multiplication pipeline combines a learned matrix vector
module with a learned reference basis multiplication module and
achieves strong held-out performance. The learned Galois action
generalizes almost perfectly to held-out predictor inputs and preserves
two-step composition and three-step cycle closure. It then supports
almost exact canonicalization and near perfect downstream multiplication
under the primary exact lookup procedure, without recovery based on
Hamming distance.

The $\mathbb F_{16}$ results preserve several qualitative patterns from
$\mathbb F_8$, but also show that the learned constructive procedures
become less reliable as the setting grows. The continued success of
ground truth canonicalization suggests that accurate transformation
learning is critical to extending these procedures beyond
$\mathbb F_8$.

In conclusion, the results show that learning an algebraic operation
and transferring it across equivalent representations are distinct
problems. Successful transfer depends on how the equivalence structure
is presented and used, and the comparison between $\mathbb F_8$ and
$\mathbb F_{16}$ shows that this success does not automatically extend
to more demanding settings.

\bibliographystyle{apalike}
\bibliography{references}
\clearpage

\appendix
\section{Proofs of the Algebraic Results}
\label{app:proofs}
We provide the proofs of the lemma and theorem in section~\ref{sec:mathematical-framework} here.
\subsection{Proof of Lemma~\ref{lem:free-galois-action}}

\begin{proof}
Let
\[
B=(b_0,\ldots,b_{n-1})
\]
be an ordered $\mathbb{F}_2$-basis of
$F=\mathbb{F}_{2^n}$. Suppose that
\[
\sigma^k(B)=B.
\]
Because the basis is ordered, this equality implies
\[
\sigma^k(b_i)=b_i
\qquad
\text{for every }i=0,\ldots,n-1.
\]

Every element $x\in F$ has a unique expansion
\[
x=\sum_{i=0}^{n-1}u_i b_i,
\qquad
u_i\in\mathbb{F}_2.
\]
Since $\sigma^k$ fixes $\mathbb{F}_2$ pointwise, we obtain
\[
\sigma^k(x)
=
\sum_{i=0}^{n-1}u_i\sigma^k(b_i)
=
\sum_{i=0}^{n-1}u_i b_i
=
x.
\]
Thus, $\sigma^k$ fixes every element of $\mathbb{F}$, and hence
\[
\sigma^k=\operatorname{id}_{F}.
\]

The Frobenius automorphism $\sigma(x)=x^2$ has order $n$ in
\[
\operatorname{Gal}(\mathbb{F}_{2^n}/\mathbb{F}_2),
\]
so $\sigma^k=\operatorname{id}_{F}$ if and only if
\[
k\equiv0\pmod n.
\]
The converse is immediate. Therefore, the stabilizer of every ordered
basis under the Galois action is trivial, and every Galois orbit has
exactly
\[
\left|
\operatorname{Gal}(\mathbb{F}_{2^n}/\mathbb{F}_2)
\right|
=n
\]
elements.
\end{proof}

\subsection{Proof of Theorem~\ref{thm:galois-multiplication}}

\begin{proof}
Let
\[
B=(b_0,\ldots,b_{n-1})
\qquad\text{and}\qquad
C=(c_0,\ldots,c_{n-1})
\]
be ordered $\mathbb{F}_2$-bases of
$F=\mathbb{F}_{2^n}$. Define the isomorphisms
\[
\phi_B,\phi_C:\mathbb{F}_2^n\longrightarrow F
\]
by
\[
\phi_B(u_0,\ldots,u_{n-1})
=
\sum_{i=0}^{n-1}u_i b_i,
\qquad
\phi_C(u_0,\ldots,u_{n-1})
=
\sum_{i=0}^{n-1}u_i c_i.
\]
By the definition of \(m_B\) in
Section~\ref{sec:basis-multiplication}, for \(u,v\in\mathbb F_2^n\), we have
\[
m_B(u,v)
=
\phi_B^{-1}
\left(
\phi_B(u)\phi_B(v)
\right),
\] and similarly for $m_C$.

First, suppose that $B$ and $C$ belong to the same Galois orbit. Then
\[
C=\sigma^r(B)
\]
for some $r\in\{0,\ldots,n-1\}$. Since $\sigma^r$ fixes
$\mathbb{F}_2$ pointwise,
\[
\phi_C=\sigma^r\circ\phi_B.
\]
Consequently,
\[
\phi_C^{-1}
=
\phi_B^{-1}\circ\sigma^{-r}.
\]
For every $u,v\in\mathbb{F}_2^n$, we then have
\begin{align*}
m_C(u,v)
&=
\phi_C^{-1}\bigl(\phi_C(u)\phi_C(v)\bigr)\\
&=
\phi_B^{-1}\circ\sigma^{-r}
\bigl(\sigma^r(\phi_B(u))\sigma^r(\phi_B(v))\bigr)\\
&=
\phi_B^{-1}\circ\sigma^{-r}
\bigl(\sigma^r(\phi_B(u)\phi_B(v))\bigr)\\
&=
\phi_B^{-1}\bigl(\phi_B(u)\phi_B(v)\bigr)\\
&=
m_B(u,v).
\end{align*}
Hence $m_B=m_C$.

Conversely, suppose that
\[
m_B=m_C.
\]
Define
\[
T=\phi_C\circ\phi_B^{-1}:
F \longrightarrow F.
\]
Because $\phi_B$ and $\phi_C$ are
$\mathbb{F}_2$-linear isomorphisms, $T$ is an
$\mathbb{F}_2$-linear bijection. We claim that $T$ also preserves
multiplication.

Let $a,b\in F$, and choose $u,v\in\mathbb{F}_2^n$ such that
\[
a=\phi_B(u),
\qquad
b=\phi_B(v).
\]
Then
\begin{align*}
T(ab)
&=
\phi_C\bigl(\phi_B^{-1}(ab)\bigr)\\
&=
\phi_C\bigl(m_B(u,v)\bigr)\\
&=
\phi_C\bigl(m_C(u,v)\bigr)\\
&=
\phi_C(u)\phi_C(v)\\
&=
T(a)T(b).
\end{align*}
Thus, $T$ is an $\mathbb{F}_2$-algebra automorphism of
$F$. Since
\[
\operatorname{Gal}
(\mathbb{F}_{2^n}/\mathbb{F}_2)
=
\langle\sigma\rangle,
\]
there exists $r\in\{0,\ldots,n-1\}$ such that
\[
T=\sigma^r.
\]
Therefore,
\[
\phi_C
=
T\circ\phi_B
=
\sigma^r\circ\phi_B.
\]

Let $e_i$ denote the $i$th standard basis vector of
$\mathbb{F}_2^n$. Applying the preceding identity to $e_i$ gives
\[
c_i
=
\phi_C(e_i)
=
\sigma^r\bigl(\phi_B(e_i)\bigr)
=
\sigma^r(b_i)
\]
for every $i=0,\ldots,n-1$. Hence
\[
C=\sigma^r(B),
\]
so $B$ and $C$ belong to the same Galois orbit.
\end{proof}

\section{Additional Experiments in $\mathbb{F}_{16}$}
\label{app:f16}
\subsection{Experimental Setting}
\label{app:f16-setting}

We extend the $\mathbb F_8$ experiments to
$\mathbb F_{16}$. The $20{,}160$ ordered
$\mathbb F_2$-bases of $\mathbb F_{16}$ form $5{,}040$ Galois orbits,
each containing four bases. Because exhaustive training over all
orbits is expensive, we
sample $50$ distinct orbits.

The orbit sample is generated once using a fixed data seed and is
shared across all tasks, conditions, and experimental seeds. Within
each sampled orbit, three bases are used for training and the
remaining basis is held out for testing. This produces $150$ training
bases and $50$ held-out bases. In all, the sampled experiment uses
$200$ of the $20{,}160$ ordered bases of $\mathbb F_{16}$.

Unlike the $\mathbb F_8$ experiments, the sampled orbits and the
training and test split remain fixed across experimental seeds.
Variation across seeds reflects model initialization and
stochastic optimization, not changes in the sampled orbits or the
held-out bases.

For each basis, we enumerate all
\[
16^2=256
\]
ordered operand pairs. Each basis contributes its complete
coordinate multiplication table. The resulting multiplication
datasets contain
\[
150\times256=38{,}400
\]
training examples and
\[
50\times256=12{,}800
\]
held-out examples.

Unless otherwise stated, results are reported as the mean and
population standard deviation across five experimental seeds. The
condition without a Galois orbit label or basis matrix was run for
three seeds and all other reported conditions use five.

\subsection{Basis conditioned multiplication}
\label{app:f16-basis-conditioning}

We first repeat the six basis-conditioning conditions from the
$\mathbb F_8$ experiment. Table~\ref{tab:f16-basis-conditioning}
reports the final training and held-out exact accuracy.

\begin{table*}[t]
\centering
\caption{
Final exact accuracy for basis conditioned multiplication in
$\mathbb F_{16}$. Values are means and population standard deviations
across experimental seeds.
}
\label{tab:f16-basis-conditioning}
\small
\begin{tabular}{lccc}
\toprule
Condition
& Seeds
& Train exact
& Held-out exact \\
\midrule
Orbit label, no $P_B$
& $5$
& $0.9791 \pm 0.0301$
& $0.9791 \pm 0.0301$ \\

$P_B$, no orbit label
& $5$
& $0.5805 \pm 0.2474$
& $0.2576 \pm 0.0439$ \\

No orbit label, no $P_B$
& $3$
& $0.2350 \pm 0.0006$
& $0.2350 \pm 0.0006$ \\

Orbit label $+\,P_B$
& $5$
& $0.5827 \pm 0.3448$
& $0.5803 \pm 0.3441$ \\

Orbit label $+$ constant tokens
& $5$
& $0.7118 \pm 0.1274$
& $0.7118 \pm 0.1274$ \\

Orbit label $+$ shuffled matrix
& $5$
& $0.7622 \pm 0.2694$
& $0.7623 \pm 0.2692$ \\

\bottomrule
\end{tabular}
\end{table*}

\paragraph{Orbit label and basis information.}
Providing the Galois orbit label without \(P_B\) yields training and
held-out exact accuracy of \(0.9791 \pm 0.0301\). As in
\(\mathbb F_8\), the label identifies a multiplication table that is
already represented by the three training bases from the same orbit.
Performance is close to perfect, although the final accuracy varies
across seeds under the fixed training schedule.

When neither the Galois orbit label nor \(P_B\) is provided, exact
accuracy remains at the floor of \(0.2350 \pm 0.0006\). The model
receives only the coordinate operand pair, which can correspond to
different outputs across the sampled Galois orbits.

Providing \(P_B\) without the Galois orbit label produces a different
pattern. Training exact accuracy rises to \(0.5805 \pm 0.2474\), but
held-out exact accuracy remains much lower at
\(0.2576 \pm 0.0439\). Four of the five seeds perform considerably
above the floor on the training bases, but this improvement does not
transfer to held-out basis representations. This suggests that the model
can partially fit the relationship between \(P_B\) and the
multiplication targets for the training bases without learning a rule
that extends reliably to a new basis from the same orbit.

\paragraph{Additional inputs with the Galois orbit label.}
The remaining three conditions add \(16\) tokens to the Galois orbit
label and operands. These tokens contain the entries of \(P_B\), fixed
constant values, or the entries of a shuffled basis matrix. In the
shuffled condition, a derangement assigns the \(200\) legitimate basis
matrices to different bases. The matrix distribution remains unchanged,
but the correct relationship between each basis and its matrix is
removed.

The three conditions differ widely across seeds. The true
\(P_B\) condition has the lowest mean held-out accuracy and the largest
variation, with outcomes ranging from the floor to almost perfect
accuracy. The constant tokens produce more consistent intermediate to
high accuracy, while the shuffled matrices achieve the highest mean but
remain sensitive to the experimental seed. Thus, the true basis matrix
does not provide an advantage over either control in the sampled
\(\mathbb F_{16}\) experiment.

\paragraph{Seed-wise learning dynamics.}
Figure~\ref{fig:f16-length-matched-curves} shows the trajectories of
held-out exact accuracy for the three conditions with equal sequence
length. The true and shuffled matrix conditions produce highly variable outcomes across seeds, while the constant token condition is
more concentrated at intermediate to high accuracy.

\begin{figure*}[t]
    \centering
    \includegraphics[width=\textwidth]
    {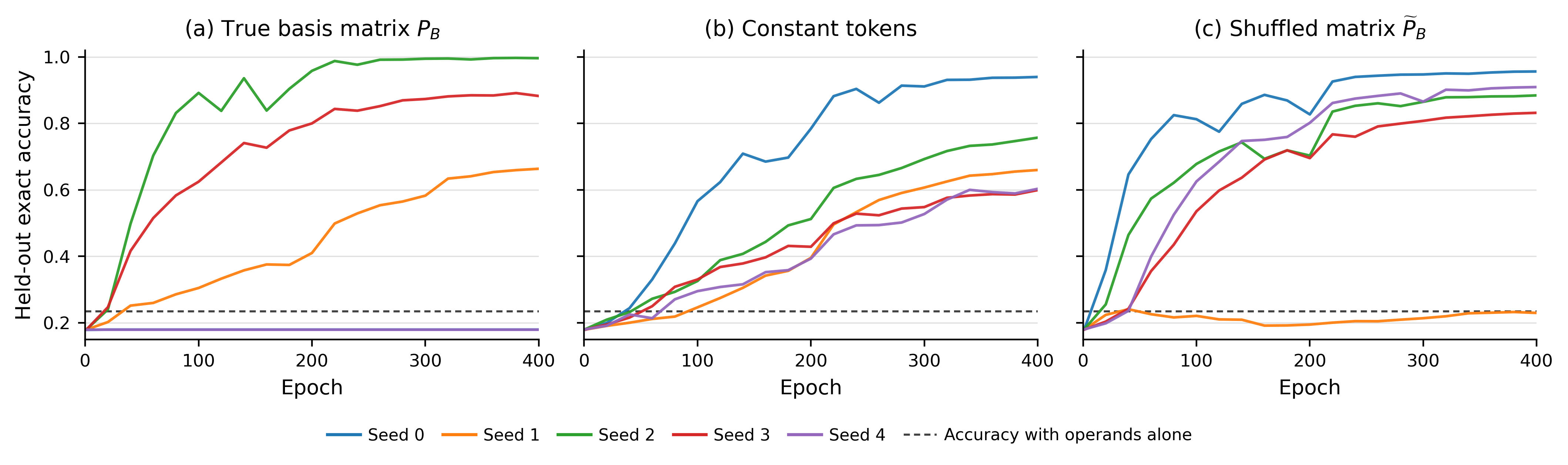}
    \caption{
    Trajectories of held-out exact accuracy for the three conditions with
equal sequence length in $\mathbb F_{16}$.
The dotted horizontal line marks the accuracy obtained using the
operands alone, \(0.2350\).
    }
    \label{fig:f16-length-matched-curves}
\end{figure*}

\paragraph{Interpretation.}
The true basis matrix has the lowest mean held-out accuracy. The constant tokens and
shuffled matrices have higher mean accuracies. However, the results
vary widely across the five seeds, so these mean differences do not
establish that either control is consistently better than the true
basis matrix. Successful runs are not unique
to the true \(P_B\) condition, and the results provide no evidence that
its correct algebraic relationship with the multiplication task gives
a stable advantage under the present model and training schedule.
Additional input positions can affect learning even when
they contain no information about the basis or contain a matrix that is
incorrectly associated with it.

The condition using \(P_B\) without the Galois orbit label also differs
from its counterpart in \(\mathbb F_8\). In \(\mathbb F_8\), training
and held-out accuracy both remain near the floor. In
\(\mathbb F_{16}\), the model partially fits the training bases, but
this improvement does not transfer to held-out bases. This pattern
indicates stronger fitting to the individual training bases without a
corresponding improvement on new bases from the same orbits. However,
the two field settings also differ in orbit coverage, dataset size,
split variation, and training schedule, so the difference cannot be
attributed solely to the field or matrix dimension.

\subsection{Orbit recognition}
\label{app:f16-orbit-recognition}
We next evaluate whether Galois orbit structure transfers to held-out
basis representations in $\mathbb F_{16}$. As in the $\mathbb F_8$
experiments, we compare Galois orbit identification with
pairwise orbit recognition. Table~\ref{tab:f16-orbit-recognition}
reports the final results.

\begin{table*}[t]
\centering
\caption{
Galois orbit identification and pairwise orbit recognition in
$\mathbb F_{16}$. Values are means and population standard deviations
across five experimental seeds.
}
\label{tab:f16-orbit-recognition}
\small
\begin{tabular}{lcccc}
\toprule
Task
& Overall train
& Overall held-out
& Positive held-out
& Negative held-out \\
\midrule
Galois orbit identification
& $1.0000 \pm 0.0000$
& $0.1720 \pm 0.0299$
& ---
& --- \\

Pairwise orbit recognition
& $0.9997 \pm 0.0007$
& $0.9093 \pm 0.0274$
& $0.9667 \pm 0.0193$
& $0.8520 \pm 0.0396$ \\
\bottomrule
\end{tabular}
\end{table*}

\paragraph{Galois orbit identification.}
The model reaches perfect training accuracy but obtains held-out
accuracy of \(0.1720 \pm 0.0299\). Although this is above
the random guessing level of \(1/50=0.0200\), it remains far below the
training result. The model shows some transfer to the
held-out basis from each sampled orbit, but it does not identify these
bases reliably.

The task uses the same fixed set of \(50\) sampled orbits for training
and evaluation. It tests a new basis representation from each orbit,
not bases belonging to previously unseen orbits. The huge gap between
training and held-out accuracy reflects poor transfer to the
held-out representation, not the introduction of new classes.

\paragraph{Pairwise orbit recognition.}
The pairwise task uses balanced sets of unique ordered pairs. The
training set contains \(300\) positive and \(300\) negative pairs,
while the held-out set contains \(150\) pairs of each class. Every
held-out pair includes the held-out basis from one of the sampled
orbits.

The model obtains overall accuracy of \(0.9997 \pm 0.0007\) on the
training pairs and \(0.9093 \pm 0.0274\) on the held-out pairs. On the
held-out set, accuracy is \(0.9667 \pm 0.0193\) for positive pairs and
\(0.8520 \pm 0.0396\) for negative pairs. This indicates that the model is more accurate on pairs from the same
orbit than on pairs from different orbits.

\paragraph{Learning dynamics.}
Figure~\ref{fig:f16-orbit-recognition-curves} shows the corresponding
training trajectories. Galois orbit identification reaches perfect training accuracy rapidly,
while held-out accuracy stabilizes between approximately \(0.12\) and
\(0.20\) across seeds. Pairwise recognition also approaches perfect
training accuracy, but maintains held-out accuracy near \(0.90\).
After the initial stage of training, accuracy on positive pairs remains
consistently higher than accuracy on negative pairs.

\begin{figure*}[t]
    \centering
    \includegraphics[width=\textwidth]
    {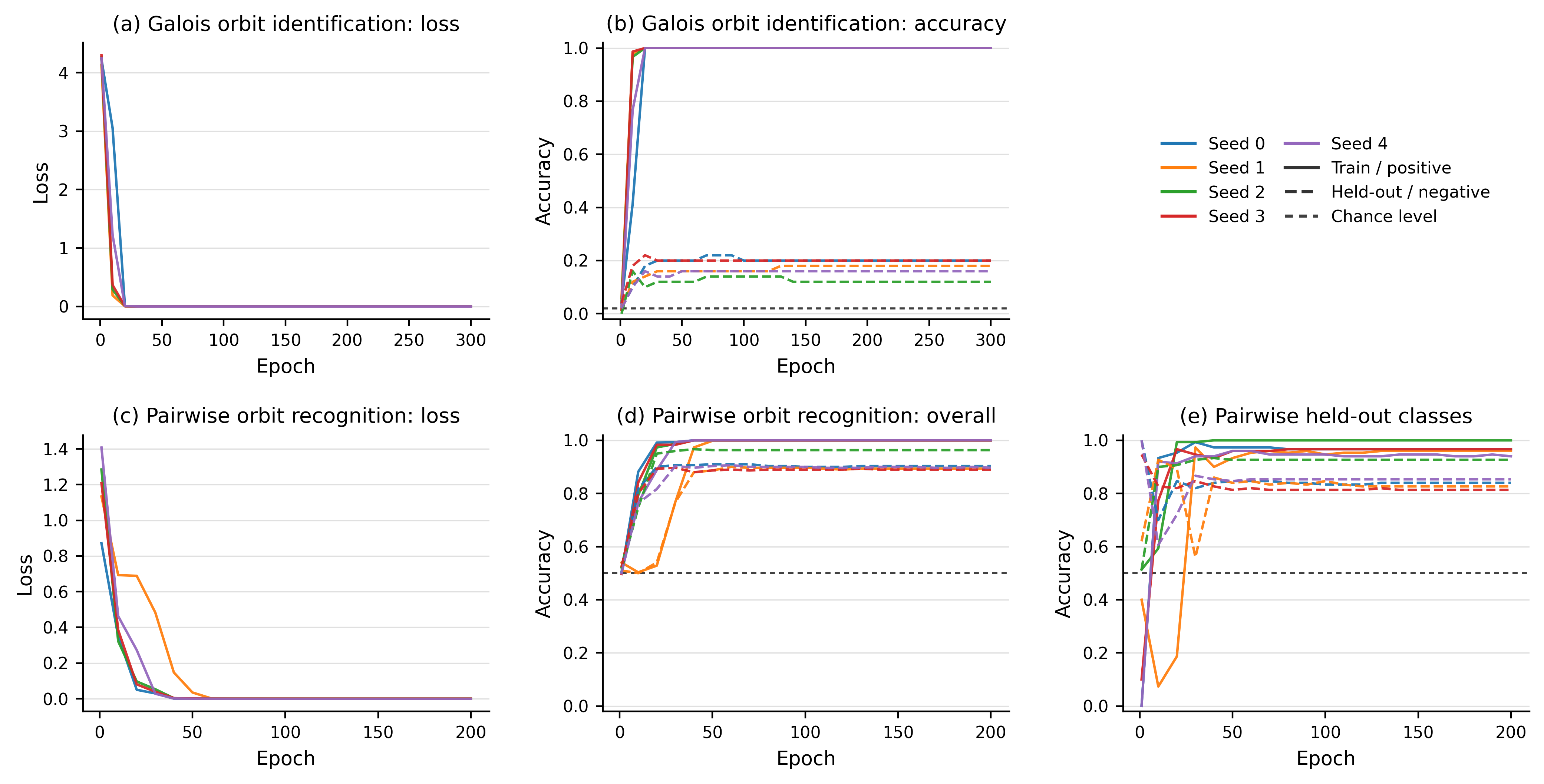}
    \caption{
    Learning curves for Galois orbit identification and pairwise orbit
recognition in $\mathbb F_{16}$. Solid curves show training
accuracy and dashed curves show held-out accuracy.
    }
    \label{fig:f16-orbit-recognition-curves}
\end{figure*}

\paragraph{Comparison with $\mathbb F_8$.}
Pairwise orbit recognition transfers strongly in both fields. Mean
held-out accuracy is approximately \(0.87\) in \(\mathbb F_8\) and
\(0.91\) in \(\mathbb F_{16}\), with accuracy on positive pairs
exceeding accuracy on negative pairs in both settings.

Galois orbit identification differs more clearly between the two
fields. Accuracy in \(\mathbb F_8\) remains near the chance level,
whereas the \(\mathbb F_{16}\) result is above chance. Because the
\(\mathbb F_{16}\) experiment uses a fixed sample of \(50\) orbits,
this comparison should be interpreted within the sampled setting.
Regardless, the result shows that some information about the orbit
labels transfers to held-out basis representations in this experiment.

\subsection{Decomposed multiplication pipeline}
\label{app:f16-decomposed-pipeline}
We apply the decomposed multiplication pipeline of
Section~\ref{sec:decomposed-pipeline} to \(\mathbb F_{16}\) using the
same decomposition and evaluation procedure as in the
\(\mathbb F_8\) experiment. After duplicate matrices are removed, we
exclude every matrix equal to \(P_B\) or \(P_B^{-1}\) for any held-out
basis. The resulting training set for the matrix vector module contains
\(297\) distinct transformations. So none of the transformations
required by the held-out bases is used to train this module.

Table~\ref{tab:f16-decomposed-results} reports the final held-out
multiplication accuracy for each seed.

\begin{table}[t]
\centering
\caption{
Held-out multiplication accuracy of the decomposed multiplication
pipeline in $\mathbb F_{16}$. The final row reports the mean and
population standard deviation across five experimental seeds.
}
\label{tab:f16-decomposed-results}
\small
\begin{tabular}{lcc}
\toprule
Seed
& Bit accuracy
& Exact accuracy \\
\midrule
$0$ & $0.9541$ & $0.9008$ \\
$1$ & $0.6002$ & $0.2338$ \\
$2$ & $0.5896$ & $0.1800$ \\
$3$ & $0.5898$ & $0.1798$ \\
$4$ & $0.5000$ & $0.0586$ \\
\midrule
Mean $\pm$ population SD
& $0.6467 \pm 0.1579$
& $0.3106 \pm 0.3006$ \\
\bottomrule
\end{tabular}
\end{table}

\paragraph{Held-out multiplication.}
Performance varies widely across seeds. Seed~0 achieves exact
accuracy of \(0.9008\). This demonstrates that high held-out accuracy is possible, but the much
lower results for the other seeds show that this outcome is not
reliable across runs. The remaining seeds obtain exact
accuracy between \(0.0586\) and \(0.2338\). The mean exact accuracy of
\(0.3106 \pm 0.3006\) reflects widely different outcomes
across seeds, not a stable intermediate level of performance.

\paragraph{Module-level learning dynamics.}
Figure~\ref{fig:f16-decomposed-module-curves} shows that the reference basis
multiplication module reaches high training accuracy for all five
seeds. In contrast, the matrix vector module is highly sensitive to the
experimental seed. Seed~0 approaches perfect training accuracy,
Seed~1 learns the operation only partially, and Seeds~2--4 remain at
low accuracy. The performance of the matrix vector module closely
matches the final held-out multiplication results.

\begin{figure*}[t]
    \centering
    \includegraphics[width=\textwidth]
    {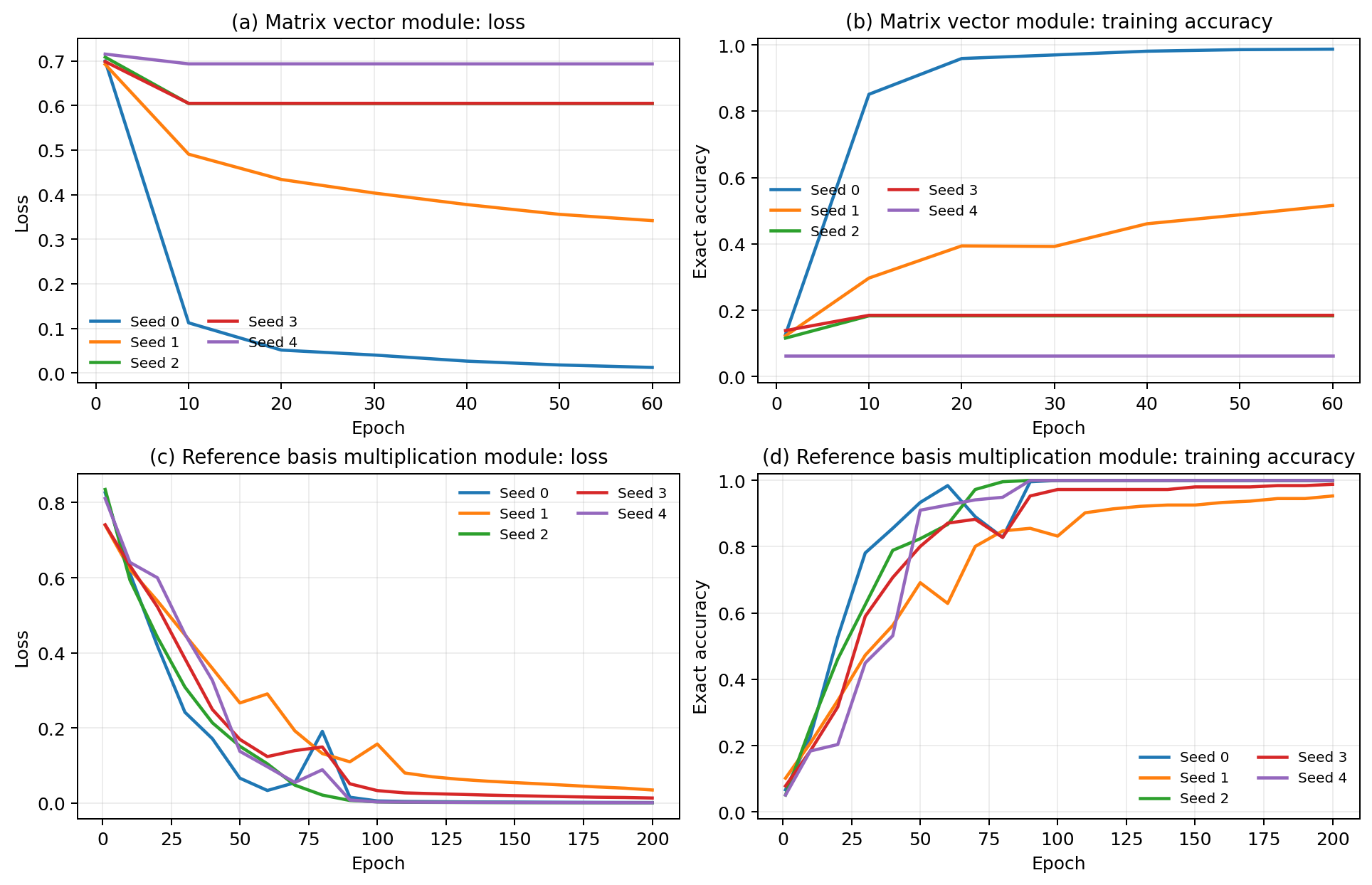}
    \caption{
    Training histories of the two learned modules in the
$\mathbb F_{16}$ decomposed multiplication pipeline.
    }
    \label{fig:f16-decomposed-module-curves}
\end{figure*}

\paragraph{Comparison with $\mathbb F_8$.}
The $\mathbb F_{16}$ results are much worse than the
corresponding $\mathbb F_8$ results. The reference basis multiplication
module is learned reliably in both settings, but the matrix vector
module becomes the principal limitation in $\mathbb F_{16}$. Because
this module is used to transform both operands and to convert the final
product back to the held-out basis, its errors can spread through
the complete pipeline.

\subsection{Learning the Galois action}
\label{app:f16-galois-action}

We repeat the held-out-input evaluation of
Section~\ref{sec:learning-galois-action} in $\mathbb F_{16}$. The learned
transformation $\widehat{\sigma}$ is trained on the $150$ training
bases and evaluated on the $50$ held-out basis matrices.
Because the Galois group $\text{Gal}(\mathbb F_{16}/\mathbb F_2)$ has order
four, evaluation includes one-step prediction, two- and three-step
composition, and four-step cycle closure.

\begin{table}[t]
\centering
\caption{
Performance of the learned transformation $\widehat{\sigma}$ on
held-out inputs in $\mathbb F_{16}$. Values are the mean and population
standard deviation across five experimental seeds.
}
\label{tab:f16-galois-action-results}
\small
\begin{tabular}{lc}
\toprule
Metric & Accuracy \\
\midrule
One-step bit accuracy
& $0.9295 \pm 0.0096$ \\

One-step exact accuracy
& $0.3280 \pm 0.0722$ \\

Two-step composition exact accuracy
& $0.3480 \pm 0.0711$ \\

Three-step composition exact accuracy
& $0.3880 \pm 0.0588$ \\

Four-step cycle-closure accuracy
& $0.4280 \pm 0.0412$ \\
\bottomrule
\end{tabular}
\end{table}

\paragraph{Held-out prediction.}
Bit accuracy after one application is consistently high, reaching
\(0.9295 \pm 0.0096\). Exact prediction of the complete \(4\times4\)
matrix is much more difficult, with accuracy of
\(0.3280 \pm 0.0722\). Therefore, most matrix entries are predicted
correctly, but errors in a small number of entries frequently prevent
the complete matrix from being recovered exactly.

Exact accuracy increases from \(0.3280\) after one application to
\(0.3480\) after two, \(0.3880\) after three, and \(0.4280\) after
four. These metrics are evaluated independently at each step, so the
increase indicates that some trajectories reach the correct orbit
position after earlier errors. Even so, fewer than half of the
held-out inputs return exactly to the original basis after four
applications. Hence, the learned transformation does not reliably
reproduce the full Galois action.

\paragraph{Training and held-out behavior.}
Figure~\ref{fig:f16-galois-action-results} shows the training histories
and final held-out results. Training exact accuracy reaches \(1.0\) for
every seed within approximately \(20\) epochs. In contrast, final exact
accuracy after one step ranges from \(0.2200\) to \(0.4400\) on
held-out inputs, despite consistently high bit accuracy. The result reflects a large generalization gap, not a failure to
fit the training transformations.

\begin{figure*}[t]
    \centering
    \includegraphics[width=\textwidth]
    {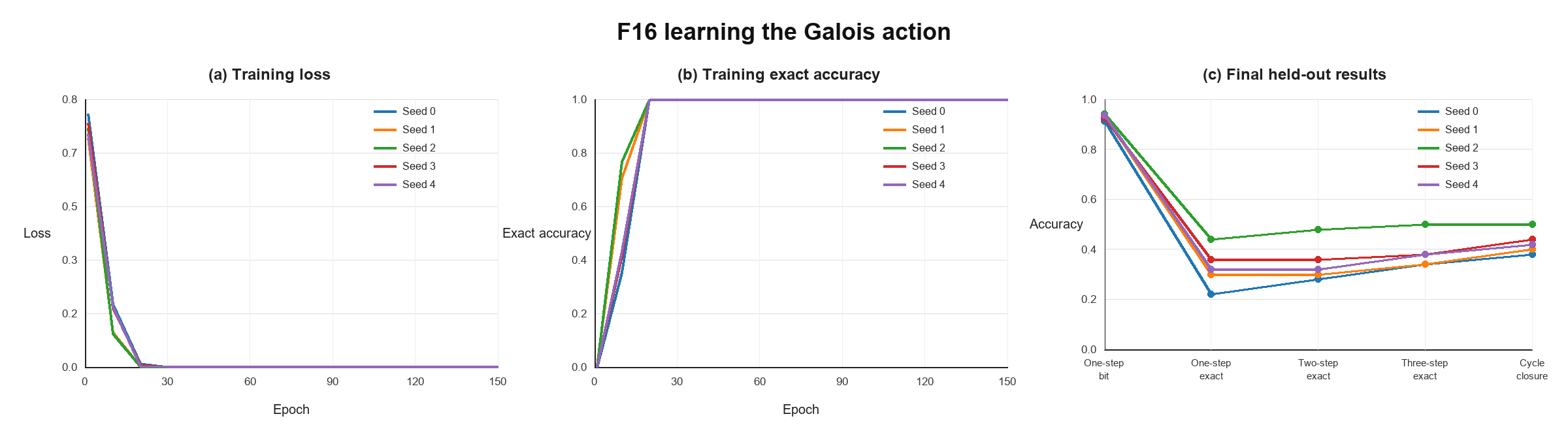}
    \caption{
    Training histories and final held-out results for the learned Galois
action in \(\mathbb F_{16}\). 
    }
    \label{fig:f16-galois-action-results}
\end{figure*}

\paragraph{Comparison with $\mathbb F_8$.}
The learned transformation generalizes less reliably in
\(\mathbb F_{16}\) than in the corresponding \(\mathbb F_8\)
experiment. In \(\mathbb F_8\), prediction after one step,
repeated composition, and cycle closure after three applications are
all nearly exact. In \(\mathbb F_{16}\), prediction of individual
matrix entries remains strong, but exact matrix prediction and cycle
closure are much weaker. The \(\mathbb F_{16}\) results capture much of the Galois action at the
level of individual matrix entries, but they do not reproduce the
almost exact held-out transformations observed in \(\mathbb F_8\).

\subsection{Canonicalization and downstream multiplication}
\label{app:f16-canonicalization}

We repeat the canonicalization and downstream multiplication experiment
of Section~\ref{sec:canonicalization} in \(\mathbb F_{16}\). We use the
same definitions of the ground truth canonical representative \(C(B)\)
and the learned canonical representative \(\widehat C(B)\), along
with the same primary exact lookup rule. 

\paragraph{Exact canonical matching and downstream multiplication.}
Table~\ref{tab:f16-canonicalization-results} reports the exact
canonical match rate and downstream multiplication accuracy. Under
learned canonicalization, the exact canonical match rate is
\(0.6720 \pm 0.0652\). The downstream multiplication model reaches
training exact accuracy of \(0.9065 \pm 0.1367\) and primary held-out
exact accuracy of \(0.7253 \pm 0.1382\).

\begin{table*}[t]
\centering
\caption{
Canonicalization and downstream multiplication in
\(\mathbb F_{16}\). Values are means and population standard deviations
across five experimental seeds.
}
\label{tab:f16-canonicalization-results}
\small
\begin{tabular}{lccc}
\toprule
Condition
& Exact canonical match
& Downstream train exact
& Downstream held-out exact \\
\midrule
Learned canonicalization
& $0.6720 \pm 0.0652$
& $0.9065 \pm 0.1367$
& $0.7253 \pm 0.1382$ \\

Ground truth canonicalization
& $1.0000 \pm 0.0000$
& $0.9728 \pm 0.0221$
& $0.9728 \pm 0.0221$ \\
\bottomrule
\end{tabular}
\end{table*}

As defined in Section~\ref{sec:canonicalization}, the exact canonical
match rate compares each held-out representative with one fixed
training representative from the same orbit, whereas downstream lookup
searches all representatives obtained from the training bases.
As a result, downstream held-out accuracy can exceed the exact canonical
match rate. Any basis for which exact lookup fails contributes no
correct downstream predictions.

\paragraph{Recovery using Hamming distance.}
When exact lookup fails, we also evaluate a recovery procedure that
selects the stored representative with the smallest Hamming distance.
This procedure increases downstream held-out exact accuracy to
\(0.8014 \pm 0.1269\), compared with \(0.7253 \pm 0.1382\) under the
primary exact lookup procedure. This result indicates that
approximate matching can recover useful canonical identifiers when an exact representative is not found. Recovery based on Hamming
distance is evaluated separately and is not used in the primary
results. 

\paragraph{Ground truth canonicalization.}
Ground truth canonicalization produces an exact canonical match rate of
\(1.0000\). With these canonical representatives, downstream
multiplication reaches exact accuracy of \(0.9728 \pm 0.0221\) on both
the training and held-out sets. The remaining error is due to
the downstream multiplication model, not canonical lookup. This
condition provides a reference for the performance attainable when
canonicalization is exact, but it is not a result of learned
canonicalization.

\paragraph{Behavior across seeds.}
Figure~\ref{fig:f16-canonicalization-results} shows greater variation
under learned canonicalization than under ground truth
canonicalization. Seed~4 has the lowest learned downstream training
accuracy and the lowest primary held-out accuracy. Recovery using
Hamming distance improves every learned canonicalization run, but
ground truth canonicalization remains more accurate for all five seeds.

\begin{figure*}[t]
    \centering
    \includegraphics[width=\textwidth]{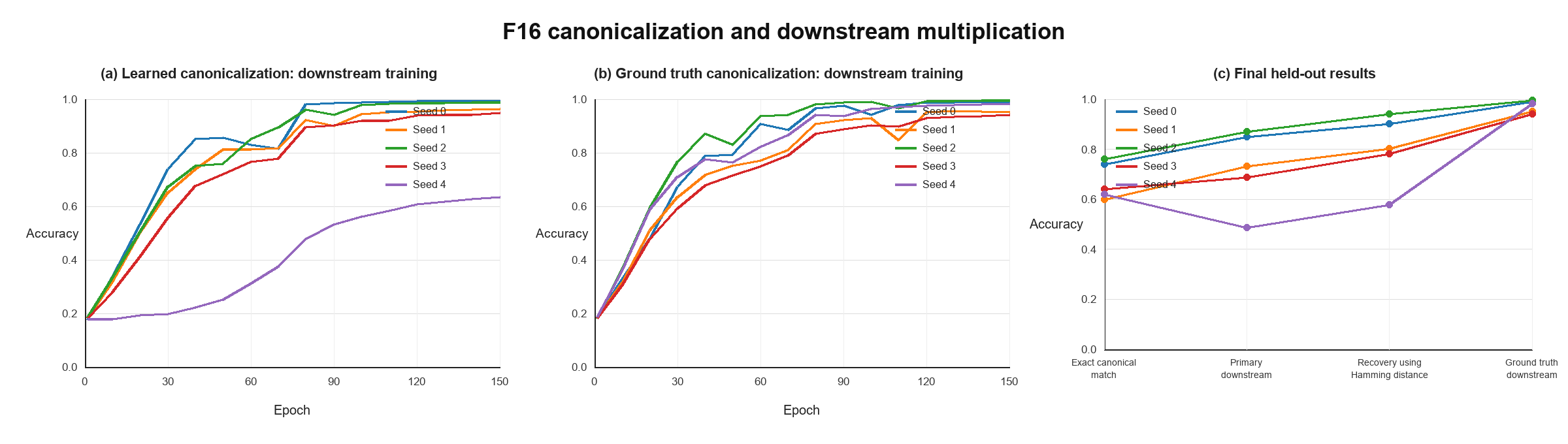}
    \caption{
    Training histories and held-out results for canonicalization and
downstream multiplication in \(\mathbb F_{16}\). Results are shown for
learned and ground truth canonicalization across five experimental
seeds.
    }
    \label{fig:f16-canonicalization-results}
\end{figure*}

\paragraph{Comparison with $\mathbb F_8$.}
Canonicalization is less reliable in \(\mathbb F_{16}\) than in the
corresponding \(\mathbb F_8\) experiment. In \(\mathbb F_8\), learned
canonical representatives almost always match exactly, and primary
downstream multiplication is nearly perfect. In \(\mathbb F_{16}\),
the learned representative of a held-out basis matches that of the
fixed training basis from the same orbit much less often. Primary
downstream multiplication is also lower and more variable.

The strong performance under ground truth canonicalization shows that
exact canonical representatives can still support accurate downstream
multiplication in \(\mathbb F_{16}\). The main limitation is constructing these representatives reliably from the learned Galois
action, although errors in the downstream multiplication model also
contribute to the final result.

\section{$\mathbb F_8$ Additional Learning Curves}

\subsection{Basis conditioned multiplication}
\label{app:f8-basis-curves}
Figure~\ref{fig:f8-appendix-basis-baselines} shows the complete
training histories for the three baseline conditions. The condition
using the Galois orbit label without \(P_B\) converges to perfect
training and held-out exact accuracy across seeds. When neither the
label nor \(P_B\) is provided, both training and held-out accuracy
remain at the floor. Providing \(P_B\) without the label also fails to
produce stable improvement, although some seeds show temporary
increases during training.

\begin{figure*}[t]
    \centering
    \includegraphics[width=\textwidth]
    {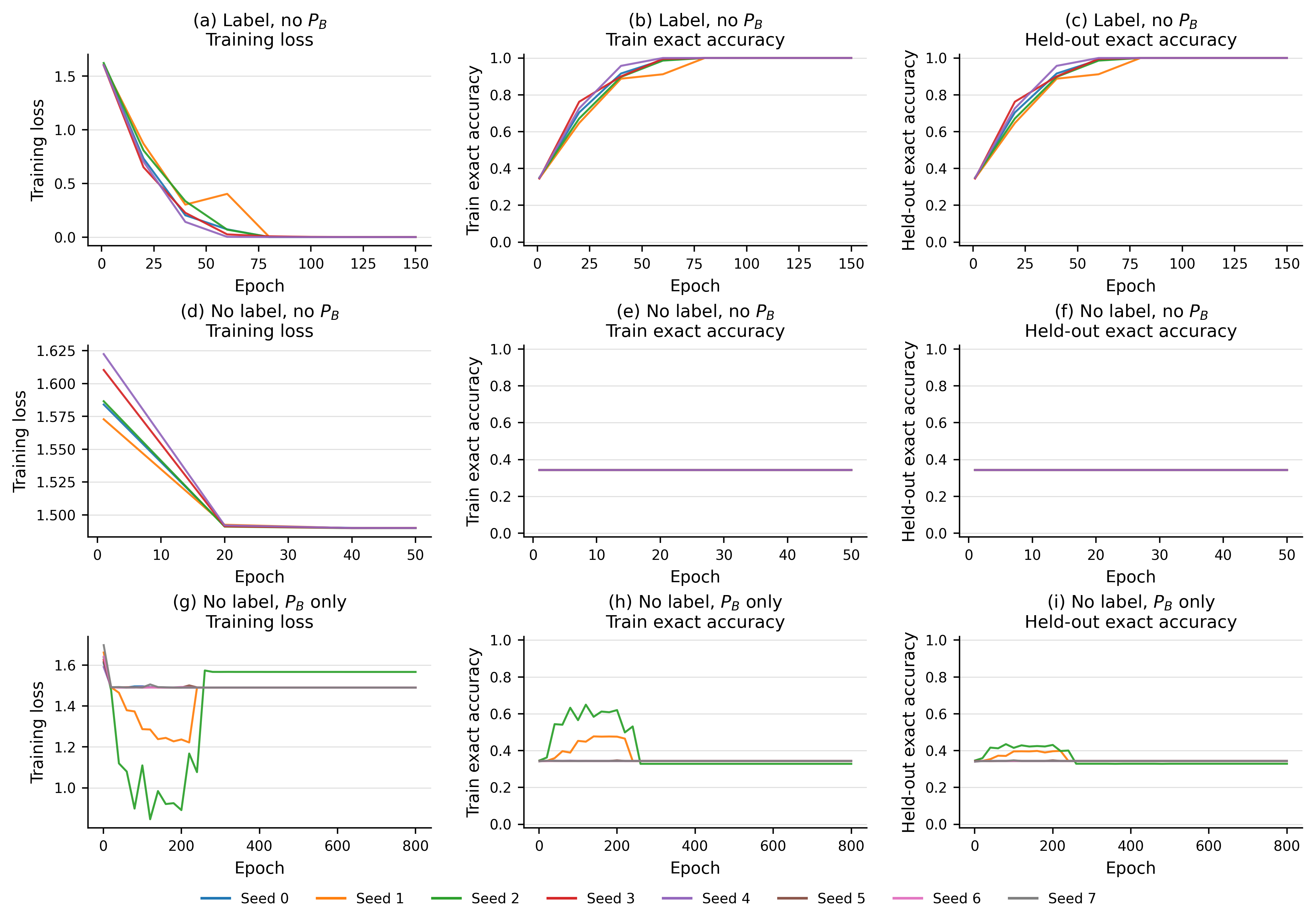}
    \caption{
    Learning curves for multiplication using the Galois orbit label alone,
neither the label nor \(P_B\), and \(P_B\) alone. Each color denotes one
experimental seed. Curves with identical values may overlap.
    }
    \label{fig:f8-appendix-basis-baselines}
\end{figure*}

Figure~\ref{fig:f8-appendix-length-matched} reports the complete
training histories for the three conditions with equal sequence length.
These conditions provide the true basis matrix, constant tokens, or a
shuffled matrix, respectively. Their loss, training accuracy, and
held-out accuracy trajectories all vary significantly across seeds.
Several runs remain at the floor and overlap visually. The corresponding final accuracies are reported in
Table~\ref{tab:basis-conditioning-results}.

\begin{figure*}[t]
    \centering
    \includegraphics[width=\textwidth]
    {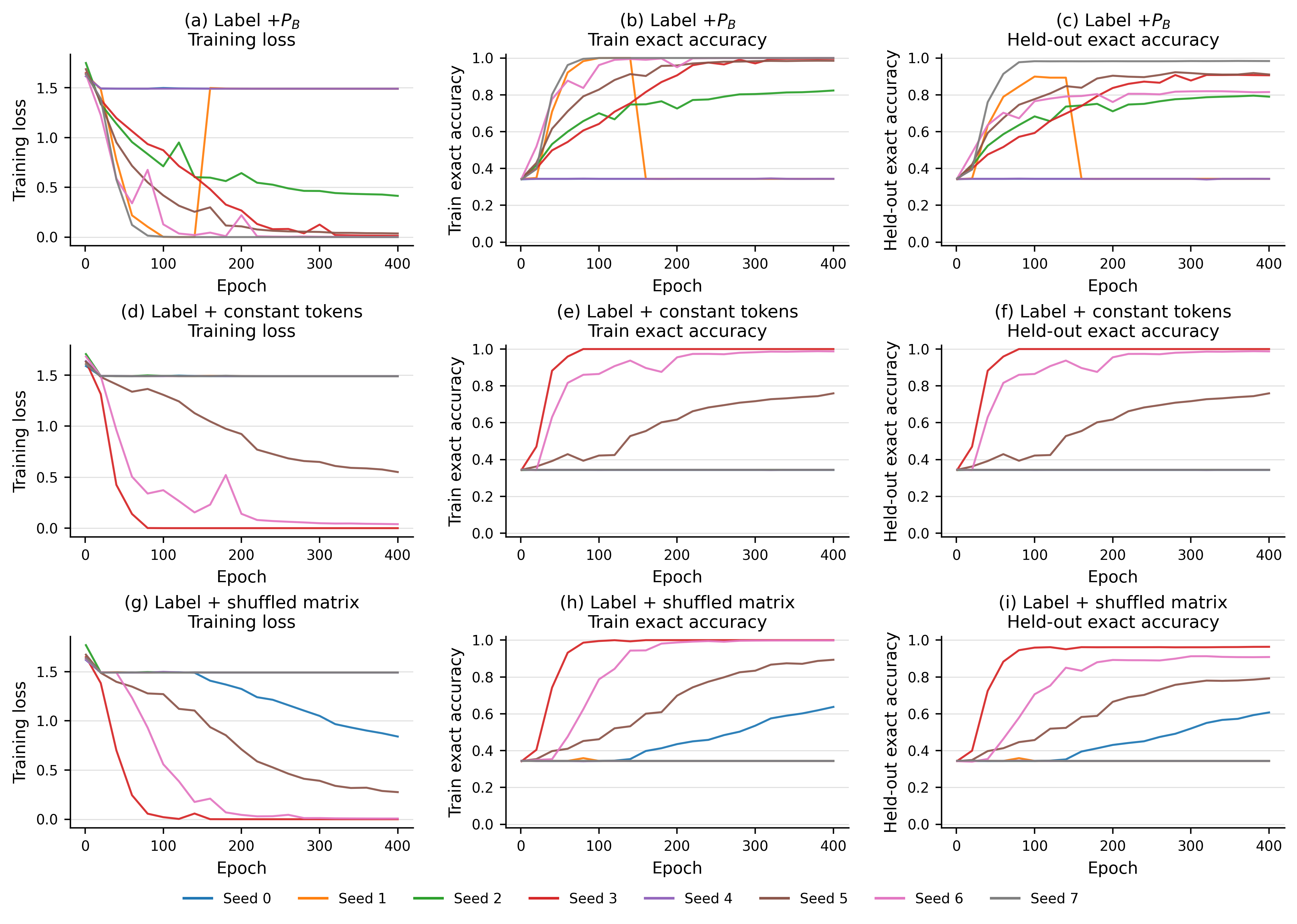}
    \caption{
    Complete learning curves for the three conditions with equal sequence
length.
    }
    \label{fig:f8-appendix-length-matched}
\end{figure*}

\subsection{Orbit recognition}
\label{app:f8-orbit-curves}

Figure~\ref{fig:f8-appendix-orbit-recognition} compares the training
dynamics of Galois orbit identification and pairwise orbit recognition.
Galois orbit identification rapidly reaches perfect training accuracy,
while held-out accuracy remains near the \(1/56\) chance level.
Pairwise orbit recognition instead achieves high held-out accuracy on
the balanced pair set, with consistently higher accuracy on positive
pairs than on negative pairs. Final results for both tasks are reported in
Table~\ref{tab:orbit-recognition-results}.

\begin{figure*}[t]
    \centering
    \includegraphics[width=\textwidth]
    {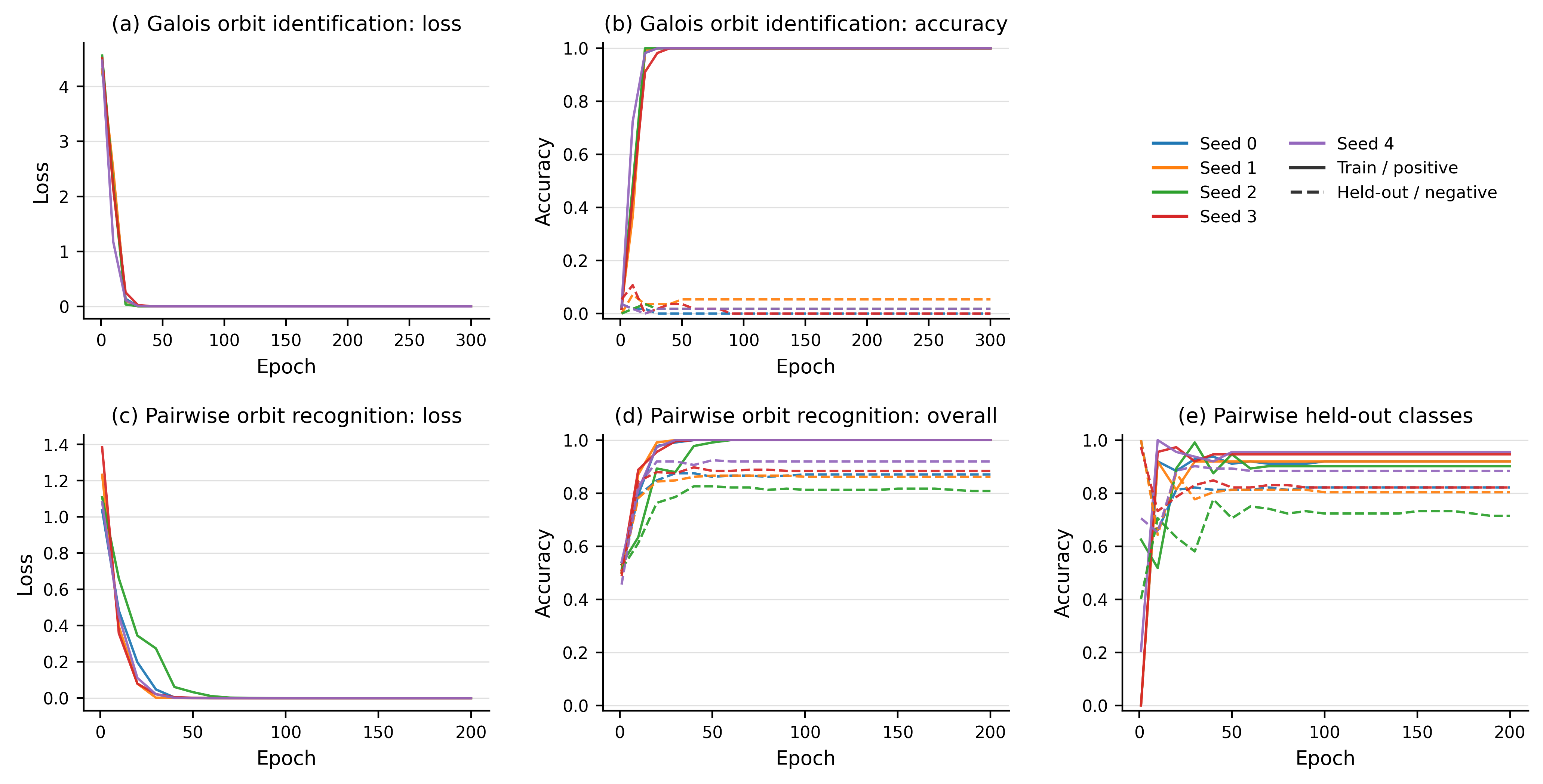}
    \caption{
    Learning curves for Galois orbit identification and pairwise orbit
recognition. Solid and dashed
lines distinguish the quantities indicated in the legend.
    }
    \label{fig:f8-appendix-orbit-recognition}
\end{figure*}

\subsection{Decomposed multiplication pipeline}
\label{app:f8-decomposed-curves}

Figure~\ref{fig:f8-appendix-decomposed} shows the training histories of
the two modules used in the decomposed multiplication pipeline. The
matrix vector module is trained on the strictly filtered
transformation set, while the reference basis multiplication module is
trained on the complete multiplication table in the fixed reference
basis. Both modules reach high training exact accuracy across seeds.

\begin{figure*}[t]
    \centering
    \includegraphics[width=\textwidth]
    {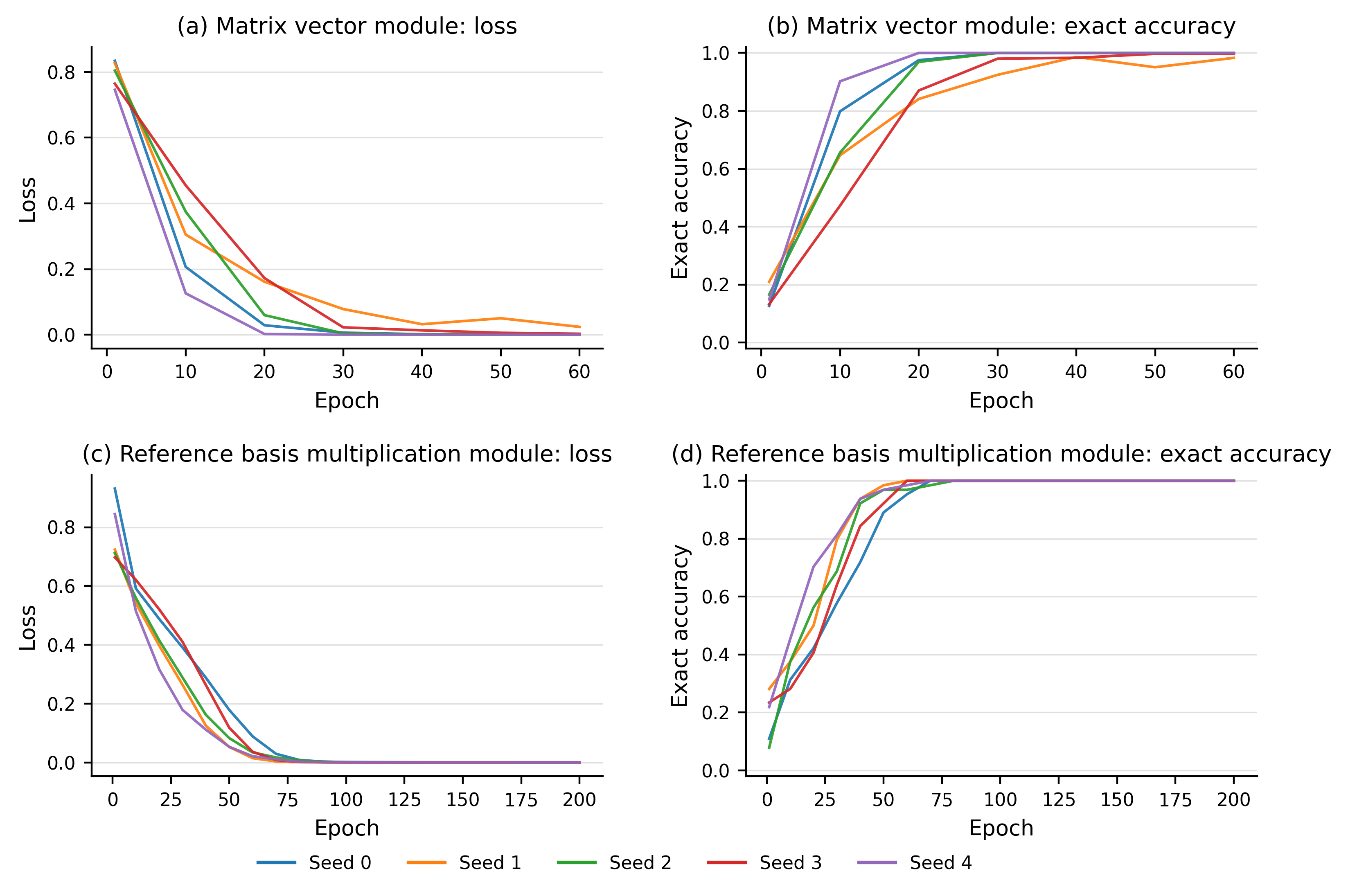}
    \caption{
    Training curves for the two learned modules in the decomposed
multiplication pipeline.
    }
    \label{fig:f8-appendix-decomposed}
\end{figure*}

These curves show the training behavior of the individual modules.
Held-out performance of the complete pipeline under the strict
separation protocol is reported separately in
Table~\ref{tab:decomposed-pipeline-results}.

\subsection{Learning the Galois action}
\label{app:f8-galois-action-curves}
Figure~\ref{fig:f8-appendix-galois-action} shows the training histories
of the learned transformation used to evaluate the Galois action on
held-out predictor inputs. The model reaches perfect training exact
matrix accuracy for all five seeds. These curves describe optimization
on the training transformations. Prediction after one application,
composition after two applications, and cycle closure after three
applications are evaluated separately on held-out inputs. The corresponding results for prediction after one application,
composition after two applications, and cycle closure after three
applications on held-out inputs are reported in
Table~\ref{tab:galois-action-results}.

\begin{figure*}[t]
    \centering
    \includegraphics[width=\textwidth]
    {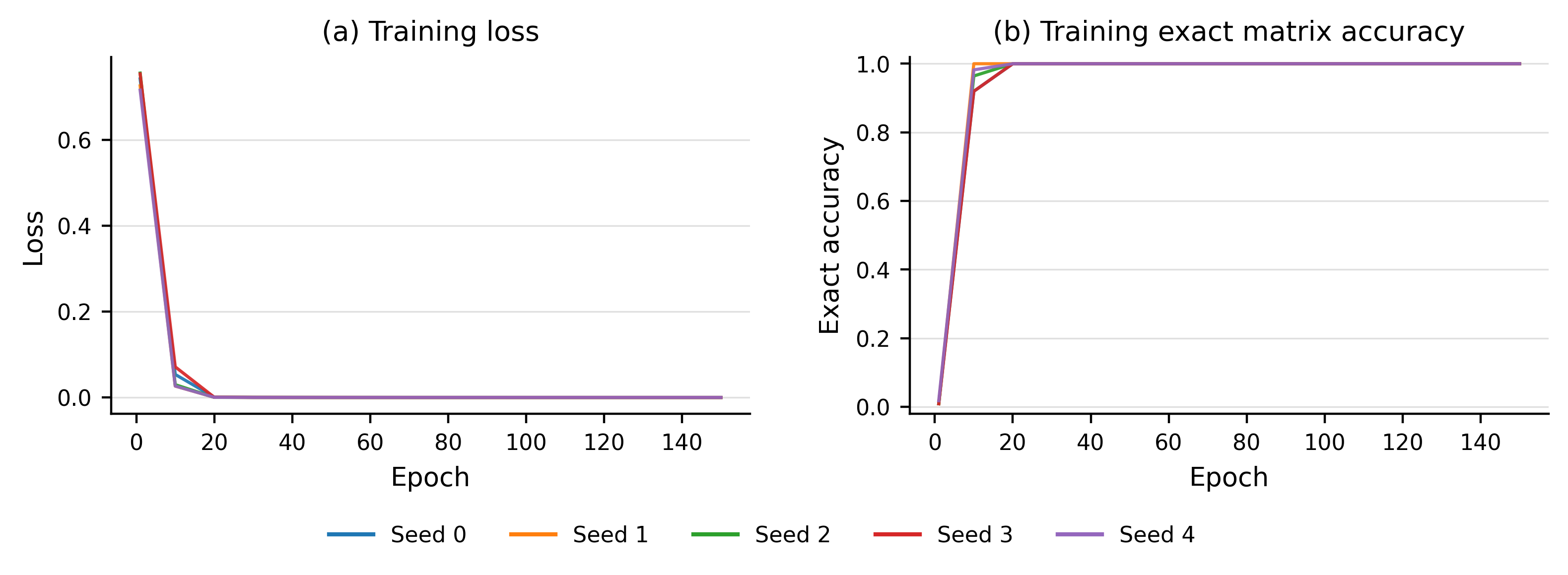}
    \caption{
    Training curves for the learned Galois action. 
    }
    \label{fig:f8-appendix-galois-action}
\end{figure*}

\subsection{Canonicalization and downstream multiplication}
\label{app:f8-canonicalization-curves}

Figure~\ref{fig:f8-appendix-canonicalization} shows the training
histories of the downstream multiplication model used in the
canonicalization pipeline. The model converges to almost perfect
training exact accuracy for all seeds, although its intermediate
trajectories differ across runs.

The learned transformation used to construct the canonical
representatives follows the same training procedure as the learned
Galois action shown in
Figure~\ref{fig:f8-appendix-galois-action}. So its training curves are not repeated here. Exact canonical matching, downstream multiplication under the primary
exact lookup procedure, and separate recovery results based on Hamming
distance are reported in
Table~\ref{tab:canonicalization-results}.

\begin{figure*}[t]
    \centering
    \includegraphics[width=\textwidth]
    {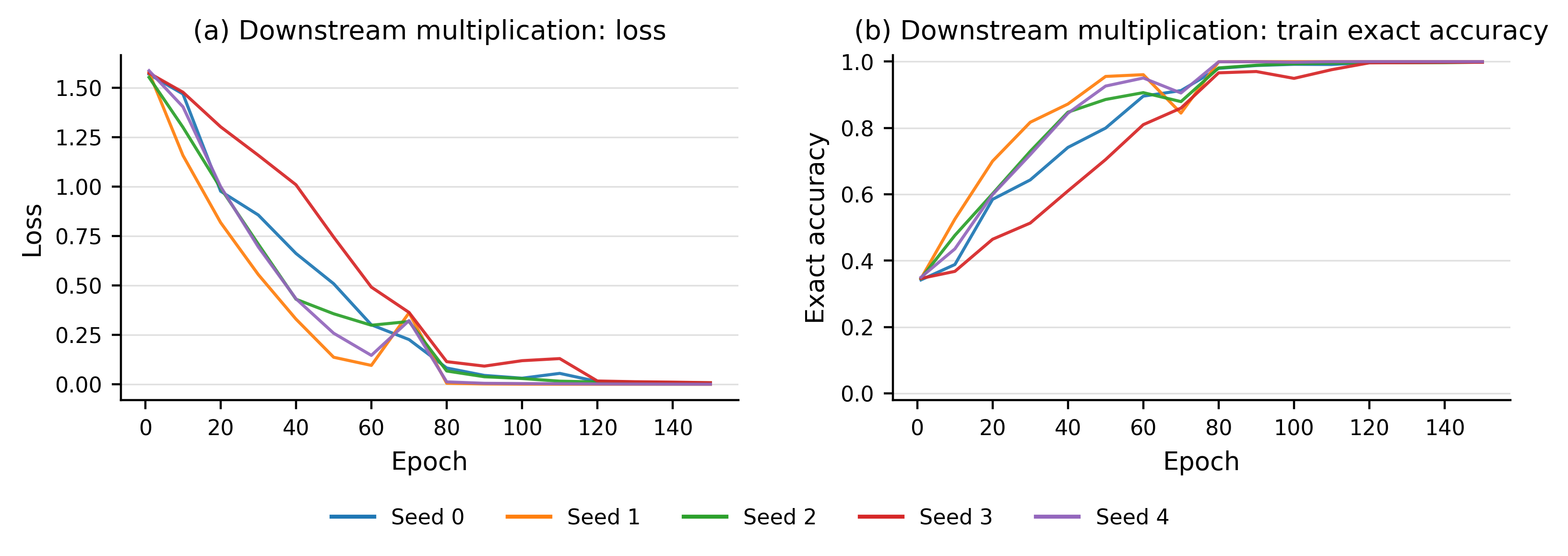}
    \caption{
    Training curves for the downstream multiplication model used in
    the canonicalization pipeline. 
    }
    \label{fig:f8-appendix-canonicalization}
\end{figure*}

\end{document}